
\documentclass{article}

\usepackage{microtype}
\usepackage{graphicx}
\usepackage{subfigure}
\usepackage{booktabs} 

\usepackage{hyperref}



\usepackage[accepted]{icml2024}

\usepackage{amsmath}
\usepackage{amssymb}
\usepackage{mathtools}
\usepackage{amsthm}

\usepackage[capitalize,noabbrev]{cleveref}

\theoremstyle{plain}
\newtheorem{theorem}{Theorem}[section]
\newtheorem{proposition}[theorem]{Proposition}
\newtheorem{lemma}[theorem]{Lemma}

\theoremstyle{definition}
\newtheorem{definition}[theorem]{Definition}

\theoremstyle{remark}

\usepackage[textsize=tiny]{todonotes}

\icmltitlerunning{Adversarial Constrained Policy Optimization}

\begin{document}

\twocolumn[
\icmltitle{Adversarial Constrained Policy Optimization: Improving Constrained Reinforcement Learning by Adapting Budgets}



\icmlsetsymbol{equal}{*}

\begin{icmlauthorlist}
\icmlauthor{Jianming Ma}{automation}
\icmlauthor{Jingtian Ji}{paris}
\icmlauthor{Yue Gao}{ai}

\end{icmlauthorlist}

\icmlaffiliation{automation}{Department of Automation, Shanghai Jiao Tong University, Shanghai, China}
\icmlaffiliation{paris}{ParisTech Elite Institute of Technology, Shanghai Jiao Tong University, Shanghai, China}
\icmlaffiliation{ai}{MoE Key Lab of Artificial Intelligence and AI Institute, Shanghai Jiao Tong University, Shanghai, China}

\icmlcorrespondingauthor{Yue Gao}{yuegao@sjtu.edu.cn}

\icmlkeywords{Constrained Reinforcement Learning}

\vskip 0.3in
]



\printAffiliationsAndNotice{}  

\begin{abstract}
Constrained reinforcement learning has achieved promising progress in safety-critical fields where both rewards and constraints are considered. 
However, constrained reinforcement learning methods face challenges in striking the right balance between task performance and constraint satisfaction and it is prone for them to get stuck in over-conservative or constraint violating local minima. 
In this paper, we propose Adversarial Constrained Policy Optimization (ACPO), which enables simultaneous optimization of reward and the adaptation of cost budgets during training. Our approach divides original constrained problem into two adversarial stages that are solved alternately, and the policy update performance of our algorithm can be theoretically guaranteed.
We validate our method through experiments conducted on Safety Gymnasium and quadruped locomotion tasks. Results demonstrate that our algorithm achieves better performances compared to commonly used baselines.

\end{abstract}

\section{Introduction}
\label{section:introduction}

Recent advances have demonstrated the potential of reinforcement learning (RL) in addressing complex decision-making and control problems across various domains. \cite{silver2017mastering, deepmind2019mastering, afsar2022reinforcement, zhuang2023robot, kiran2021deep} The RL framework is rooted on the reward hypothesis \cite{silver2021reward}, which suggests that all tasks can be modeled as maximizing a single reward function.
However, the practical challenge lies in selecting an appropriate form of the reward function that can elicit the desired behavior from the agent. This challenge becomes particularly pronounced in safety-critical tasks, where constraints on agent behavior often exist. Merely intuitively adjusting the reward function to balance constraint satisfaction and task performance poses a formidable challenge.

Constrained RL presents a promising avenue for generating desired behaviors that satisfy constraints. In the Constrained RL framework, the agent aims to maximize the reward function while simultaneously satisfying a set of constraints defined by additional cost functions and cost budgets. Unlike reward functions, cost functions and budgets can be designed in a manner that holds clear physical meaning \cite{roy2021direct, kim2023not, gangapurwala2020guided}, considerably facilitating parameter tuning.

Nonetheless, Constrained RL encounters a trade-off between task performance and constraint satisfaction, which amplifies the difficulty of training. Most Constrained RL algorithms \cite{cpo, tessler2018reward, ipo, ray2019benchmarking, crpo, cup} incorporate constraints into the policy optimization process by updating the local policy using policy gradient method and constrained optimization techniques. Throughout the training process, the cost budgets, which quantitatively represent the strength of constraints, remain unchanged. One challenge is that the policy tends to get stuck at an over-conservative sub-optimal solution, where constraints are satisfied but the reward is low.

One potential approach to enhance sub-optimal solutions in Constrained RL involves adjusting the cost budget during training to encourage policy exploration. Prior studies \cite{sootla2022enhancing, yangself} have found that policy performance can be improved by intuitively changing the cost budget during training, like the way of curriculum learning. However, this strategy heavily relies on expert experience to pre-define a desired budget trajectory before training. Furthermore, this method lacks performance guarantees when the budget undergoes changes.

In this study, we propose a novel policy optimization strategy for Constrained RL that enables adaptive optimization of both the reward return and cost budget. Specifically, we decompose the original problem into two adversarial stages: one stage focuses on maximizing the reward under the current cost budget, while the other stage aims to minimize the cost while ensuring the reward remains above the current reward budget. Throughout the training process, these two stages are alternately solved, and the cost and reward budgets are updated based on the solutions. The algorithm is referred to as Adversarial Constrained Policy Optimization due to the adversarial nature of the two stages.
By adversarially learning the cost budget and reward return, our algorithm is more likely to escape local minima and achieve improved rewards while still satisfying constraints. Furthermore, we provide theoretical lower bounds on the performance of our algorithm's updates, offering insights into its efficacy.

Our contributions are summarized as follows:
\begin{itemize}
    \item We propose Adversarial Constrained Policy Optimization (ACPO), an constrained RL framework that can simultaneously optimizing rewards and adapting cost budgets during training to improve final performances.
    \item We use two adversarial stages that solved alternately to balance the trade-off between reward performance and constraint satisfaction, and theoretically analyze the lower bound of policy update of our algorithm.
    \item We conduct experiments on Safety Gymnasium and quadruped locomotion tasks. Results demonstrate ACPO can outperform other baselines with higher reward under the same cost budget.  
\end{itemize}

\section{Related Works}
\label{section:related_works}

Constrained reinforcement learning is a class of generalized RL methods that aim to maximize reward while adhering to specific constraints. In this letter, our focus does not encompass safe exploration, i.e., the policy may violate constraints during the training process, but it needs to satisfy constraints when it eventually converges. Depending on the approach used to tackle the optimization problem, we can broadly classify the methods into the following categories: \par
{\bf Lagrange methods}: In these methods, the Lagrange relaxation technique is used to convert original problem into an unconstrained problem. Extensive canonical algorithms are proposed based on the Lagrange approach, such as \cite{chow2018risk, tessler2018reward, liang2018accelerated, paternain2019constrained, stooke2020responsive, ding2021provably, chen2021primal}. These methods introduce Lagrange multipliers into the problem, and the initialization and learning rate of multipliers are sensitive to overall performances. \par
{\bf Primal methods}: These methods \cite{dalal2018safe, chow2018lyapunov, cpo, pcpo, crpo, ipo, p3o} solve the original constrained problem without optimizing Lagrange multipliers. CPO \cite{cpo} replaces the objective and constraint with surrogate functions and can ensure monotonic performance improvement and constraint satisfaction. It solves the primal problem within a trust region using appropriate approximations. PCPO \cite{pcpo} splits the primal problem into two steps by first updating policy using regular TRPO \cite{trpo}, and then projecting policy into feasible region. CUP \cite{cup} provides theoretical analysis of surrogate function based methods combining with generalized advantage estimator (GAE) \cite{gae}, and gives tighter surrogate bounds. CRPO \cite{crpo} uses an alternate update strategy to solve the primal problem and gives theoretical convergence guarantees. Some  research use penalty functions to present constraints, e.g. IPO \cite{ipo} and P3O \cite{p3o}. These methods demonstrate that 
the solution of unconstrained problem using barrier function can converge to the optimal solution of the primal problem.

\section{Preliminaries}
\label{section:preliminaries}

Constrained reinforcement learning is often formulated as a Constrained Markov Decision Process (CMDP) \cite{cmdp} that is a tuple $\mathcal{M}=(\mathcal{S},\mathcal{A},P,R,\mathcal{C},\rho_0,\gamma)$, where $\mathcal{S}$ is the state space, $\mathcal{A}$ is the action space, $P:\mathcal{S}\times\mathcal{A}\times\mathcal{S}\rightarrow [0,1]$ is the transition model, $R:\mathcal{S}\times\mathcal{A}\rightarrow \mathbb{R}$ is the reward function, $\mathcal{C}$ is the set of cost functions $C_i:\mathcal{S}\times\mathcal{A}\rightarrow \mathbb{R},i=1,\cdots,m$, $\rho_0:\mathcal{S}\rightarrow [0,1]$ is the initial state distribution, $\gamma\in (0,1)$ is the discounted factor. Let $\pi:\mathcal{S}\rightarrow P(\mathcal{A})$ denote a policy mapping the state space to the space of probability distributions over actions, and $\pi(a|s)$ denotes the probability of selecting action $a$ in state $s$. Let $\tau=(s_0,a_0,\cdots,)$ denote a sample trajectory. The value function and action-value function are defined as 
\begin{gather}
    V_R^\pi (s)=\mathbb{E}_{\tau\sim\pi}\Big[\sum_{t=0}^\infty \gamma^t R(s_t,a_t)|s_0=s \Big] \\
    Q_R^\pi(s,a)=\mathbb{E}_{\tau\sim\pi}\Big[\sum_{t=0}^\infty \gamma^t R(s_t,a_t)|s_0=s,a_0=a \Big]
\end{gather}
And the advantage function is defined as $A_R^\pi(s,a)=Q^\pi(s,a)-V^\pi(s,a)$. 
Similarly, we can define the cost value functions, cost action-value functions and cost advantage functions as $V_{C_i}^\pi, Q_{C_i}^\pi, A_{C_i}^\pi, i=1,\cdots,m$ by replacing the reward $R$ with $C_i$. Define the reward return as $J_R(\pi)=\mathbb{E}_{s\sim\rho_0}[V_R^\pi(s)]$ and the $C_i-return$ as $J_{C_i}(\pi)=\mathbb{E}_{s\sim\rho_0}[V_{C_i}^\pi(s)]$, we aim to find a feasible policy that maximum the reward return: $\pi^*=\arg\max_{\pi\in\Pi_{\mathcal{C}}} J_R(\pi)$, where $\Pi_{\mathcal{C}}=\{\pi\in\Pi: J_{C_i}(\pi)\leq d_i,i=1,\cdots,m\}$ is the set of feasible policies. \par

Let $\mathbf{P}_\pi \in \mathbb{R}^{|\mathcal{S}|\times|\mathcal{S}|}$ be a state transition probability matrix, and its $(s,s')$-th component is: $\mathbf{P}_\pi(s,s')=\sum_{a\in\mathcal{A}}P(s'|s,a)\pi(a|s)$, which denotes one-step state transition probability from $s$ to $s'$ by executing $\pi$.
We use $P_\pi(s_t=s'|s)=\mathbf{P}_\pi^t(s,s')$ to denote the probability of visiting the state $s'$ after $t$ steps from the state $s$ by executing $\pi$. 
Then we define the discounted state visitation distribution on initial distribution $\rho_0$ as
\begin{equation}
    d_\pi(s)=(1-\gamma)\mathbb{E}_{s_0\sim \rho_0}\Big[\sum_{t=0}^\infty \gamma^t P_\pi(s_t=s|s_0) \Big]
\end{equation}

\subsection{Constrained Policy Update} 
\label{section:constrained_policy_update}

The Constrained RL framework aims to solve the following problem:
\begin{equation}
\begin{gathered}
    \pi^* = \arg\max_{\pi \in \Pi_\theta}\; J_R(\pi)  \\
    \mathrm{s.t.}\;\; J_C(\pi) \leq d
\end{gathered}    
\end{equation}
where $d \geq 0$ is the desired cost budget, $\Pi_\theta \subseteq \Pi$ denotes a set of parametrized policies with parameters $\theta$. It is intractable to solve above problem directly due to the curse of dimensionality, so local policy update is commonly used to solve the problem iteratively:
\begin{equation}
\begin{gathered}
    \pi_{k+1} = \arg\max_{\pi \in \Pi_\theta}\; J_R(\pi) \\
    \mathrm{s.t.}\;\; J_C(\pi) \leq d,\quad D(\pi||\pi_k) \leq \delta
\end{gathered}    
\end{equation}
where $D(\cdot||\cdot)$ denotes a distance measure between policies, $\delta$ is a small positive scalar. Specifically, the problem can be rewritten as the follow using surrogate functions, according to Lemma \ref{lemma:CPO_bound}:
\begin{equation}
\label{eq:optimization_formulation}
\begin{gathered}
    \pi_{k+1} = \arg\max_{\pi\in\Pi_{\theta}} \mathop{\mathbb{E}}\limits_{s\sim d_{\pi_k} \atop a\sim \pi}[A^{\pi_k}_R(s,a)]    \\
    \mathrm{s.t.}\;\; J_C(\pi_k) + \frac{1}{1-\gamma}\mathop{\mathbb{E}}\limits_{s\sim d_{\pi_k} \atop a\sim \pi} [A^{\pi_k}_C(s,a)] \leq d   \\
    \bar{D}_{KL}(\pi||\pi_k) \leq \delta
\end{gathered}
\end{equation}

where $\bar{D}_{KL}(\pi||\pi_k)=\mathbb{E}_{s\sim d_{\pi_k}}[D_{KL}(\pi||\pi_k)[s]]$, $D_{KL}(\cdot||\cdot)$ denotes the KL divergence between distributions of two policies.

\begin{lemma}[policy performance bound \cite{cpo}]
\label{lemma:CPO_bound}
For any policies $\pi$ and $\pi'$, the following bounds hold:
\begin{gather}
J(\pi') - J(\pi) \geq \frac{1}{1-\gamma} \mathop{\mathbb{E}}\limits_{s\sim d_{\pi} \atop a\sim\pi'}[A^{\pi}(s,a)]  \nonumber \\
 - \frac{\sqrt{2}\gamma \epsilon^{\pi'}}{(1-\gamma)^2} \sqrt{\mathbb{E}_{s\sim d_\pi}[D_{KL}(\pi'||\pi)[s]]}  \\
J(\pi') - J(\pi) \leq \frac{1}{1-\gamma} \mathop{\mathbb{E}}\limits_{s\sim d_{\pi} \atop a\sim\pi'}[A^{\pi}(s,a)]  \nonumber \\
 + \frac{\sqrt{2}\gamma \epsilon^{\pi'}}{(1-\gamma)^2} \sqrt{\mathbb{E}_{s\sim d_\pi}[D_{KL}(\pi'||\pi)[s]]}   
\end{gather}
where $\epsilon^{\pi'}=\max_s|\mathbb{E}_{a\sim\pi'}[A^{\pi}(s,a)]|$.
\end{lemma}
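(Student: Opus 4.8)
The plan is to prove this as an instance of the standard trust-region performance-difference bound: the exact change in return is expressed over the visitation distribution of the \emph{new} policy, whereas the surrogate term on the right-hand side is evaluated under the \emph{old} distribution $d_\pi$, and the entire error term $\tfrac{\sqrt2\gamma\epsilon^{\pi'}}{(1-\gamma)^2}\sqrt{\cdots}$ arises purely from controlling this distribution mismatch. I would establish both inequalities at once by deriving a single two-sided bound on the residual $\bigl|J(\pi')-J(\pi) - \tfrac{1}{1-\gamma}\mathbb{E}_{s\sim d_\pi, a\sim\pi'}[A^\pi(s,a)]\bigr|$, since the lower and upper bounds are just the $\mp$ versions of that estimate.

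First I would invoke the performance difference lemma (Kakade--Langford), the exact identity $J(\pi')-J(\pi) = \tfrac{1}{1-\gamma}\mathbb{E}_{s\sim d_{\pi'}, a\sim\pi'}[A^\pi(s,a)]$. Writing $\bar A(s) := \mathbb{E}_{a\sim\pi'}[A^\pi(s,a)]$ and subtracting the surrogate term evaluated under $d_\pi$, the residual becomes $\tfrac{1}{1-\gamma}\sum_s (d_{\pi'}(s) - d_\pi(s))\bar A(s)$. By Hölder's inequality this is bounded in absolute value by $\tfrac{1}{1-\gamma}\|d_{\pi'} - d_\pi\|_1 \max_s|\bar A(s)| = \tfrac{1}{1-\gamma}\|d_{\pi'}-d_\pi\|_1\,\epsilon^{\pi'}$, directly from the definition of $\epsilon^{\pi'}$.

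The main obstacle is the remaining step: bounding $\|d_{\pi'} - d_\pi\|_1$ by a measure of policy divergence. I would write the visitation distributions in resolvent form, $d_\pi = (1-\gamma)(I-\gamma\mathbf{P}_\pi)^{-1}$ applied to $\rho_0$, and expand the difference of inverses via the identity $(I-\gamma\mathbf{P}_{\pi'})^{-1} - (I-\gamma\mathbf{P}_\pi)^{-1} = \gamma (I-\gamma\mathbf{P}_{\pi'})^{-1}(\mathbf{P}_{\pi'}-\mathbf{P}_\pi)(I-\gamma\mathbf{P}_\pi)^{-1}$. Taking $\ell_1$ norms, using that $(1-\gamma)(I-\gamma\mathbf{P})^{-1}$ acts as a stochastic operator and that $\|\mathbf{P}_{\pi'}(s,\cdot)-\mathbf{P}_\pi(s,\cdot)\|_1 \le 2 D_{TV}(\pi'\|\pi)[s]$, yields $\|d_{\pi'}-d_\pi\|_1 \le \tfrac{2\gamma}{1-\gamma}\mathbb{E}_{s\sim d_\pi}[D_{TV}(\pi'\|\pi)[s]]$. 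This resolvent manipulation, and the careful bookkeeping of the $\tfrac{\gamma}{1-\gamma}$ factors it produces, is the technical heart of the argument.

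Finally I would convert total variation into KL divergence: Pinsker's inequality gives $D_{TV}(\pi'\|\pi)[s] \le \sqrt{\tfrac12 D_{KL}(\pi'\|\pi)[s]}$, and Jensen's inequality (concavity of $\sqrt{\cdot}$) moves the expectation inside the root, so that $\mathbb{E}_{s\sim d_\pi}[D_{TV}(\pi'\|\pi)[s]] \le \sqrt{\tfrac12\mathbb{E}_{s\sim d_\pi}[D_{KL}(\pi'\|\pi)[s]]}$. Chaining the three estimates collapses the constants to exactly $\tfrac{\sqrt2\gamma\epsilon^{\pi'}}{(1-\gamma)^2}$, and since the residual is sandwiched by $\pm$ this quantity, both the lower and upper bounds follow simultaneously.
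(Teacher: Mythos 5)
Your proposal is correct: the paper does not reprove this lemma but simply cites it from CPO (Achiam et al., 2017), and your argument — performance difference lemma, H\"older on the visitation-distribution mismatch, the resolvent identity giving $\|d_{\pi'}-d_\pi\|_1 \le \tfrac{2\gamma}{1-\gamma}\mathbb{E}_{s\sim d_\pi}[D_{TV}(\pi'\|\pi)[s]]$, then Pinsker and Jensen — is exactly the proof in that reference, with the constants $2/\sqrt{2}=\sqrt{2}$ collapsing as you state. No gaps.
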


There exists many approaches to solve above optimization problem, e.g. CPO \cite{cpo}, PCPO \cite{pcpo} approximate the objective and constraint functions using Taylor expansion and solve the quadratic optimization problem within a trust region; IPO \cite{ipo}, P3O \cite{p3o} handle the KL divergence constraint through PPO \cite{ppo} method, and use penalty function to transform the cost constraint into additional objective term. 

In the above framework, the cost budget $d$ is set to the desired value and unchanged during the whole process of training. That is helpful for the agent to achieve conservative exploration during the training process which is crucial for some safety-critical scenarios. However, in this paper, we focus solely on ensuring constraint satisfaction at the end of policy convergence. During the training process, we allow for some acceptable constraint violations to facilitate exploration. 
Furthermore, we posit that by appropriately adjusting the strength of constraints, such as relaxing constraints during the initial stages of training, the policy can ultimately achieve improved performance. Previous experiments conducted in \cite{yangself} demonstrate that adjusting the cost budget using curriculum learning results in higher rewards compared to using a fixed cost budget. Nevertheless, the adjustment of the cost budget heavily relies on prior knowledge of the environment and expert experience. Hence, finding a generative and adaptive approach to adjust the cost budget, effectively enhancing the final performance, remains an open problem.

\section{Adversarial Constrained Policy Optimization}
\label{section:methodology}

\subsection{Algorithm Overview}
For a better trade off between the reward performance and constraint satisfaction, we develop ACPO, an iterative method that can optimize reward and adjust cost budget simultaneously. \par

We consider the following problem: At the beginning of the training process, there are a large initial cost budget $d_0 \geq d_{desired}$ and a initial reward return $g_0$. How to adaptively adjust the cost budget and optimize the reward, in order to finally satisfy desired constraints and achieve better reward performance?
This problem can be split into two sub-problems: One is how to reach the Pareto-optimal solution, defined in \Cref{definition:pareto-optimal}; and the other is how to search the Pareto front to reach desired constraint-satisfying location. In our algorithm, the first sub-problem is solved by running two adversarial stages alternately, the second sub-problem is solved by another projection stage. An overview of our algorithm is illustrated in \Cref{fig:overview}. 

\begin{definition}[Pareto-optimal solution] \label{definition:pareto-optimal}
    We call $\pi$ dominates $\pi'$, denoted as $\pi \succ \pi'$, if and only if $J_R(\pi) \geq J_R(\pi'), J_{C}(\pi) \leq J_{C}(\pi')$ and at least one inequality strictly holds. We call $\pi_\theta$ is a global Pareto-optimal solution if $\forall \pi_{\theta'} \in \Pi_{\theta}$, $\pi_{\theta'}$ does not dominate $\pi_\theta$.
\end{definition}

\begin{figure}[th!]
\begin{center}
\centerline{\includegraphics[width=0.7\columnwidth]{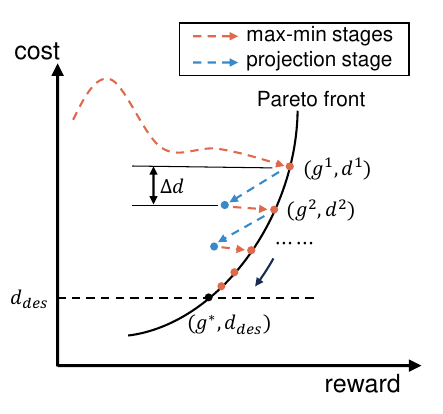}}
\caption{An illustration of our algorithm. The red dotted line represents the process of alternately iterations between max-reward and min-cost stages. The blue dotted line represents the projection stage. $\Delta d$ denotes the change of cost budget after projection stage explained in \Cref{section:projection}.}
\label{fig:overview}
\end{center}
\end{figure}

\subsection{Policy Update}
In this section, we explain in detail how to update the cost and reward alternately to reach the Pareto-optimal solution. Instead of using Pareto optimization or multi-objective optimization methods \cite{zhou2023gradient, huang2022constrained, xu2020prediction}, we propose a novel method to optimize reward and cost budget simultaneously in the perspective of constrained policy optimization, so that we can take advantage of theoretical and algorithmic foundations of Constrained RL. 

Specifically, the policy update process is divided to two alternating stages:
\newtheorem{stage}{ \bf Stage}
\begin{stage}[\textbf{max-reward}]
\label{stage:max-reward}
Given current cost budget $d^k$, maximize the reward return. $\pi^{d^k}_{max}$ denotes the solution of this problem, and $g^k=J_R(\pi^{d^k}_{max})$.
\begin{equation}
\begin{gathered}
    g^k = \max_\pi J_R(\pi) \\
    s.t. \;\; J_{C}(\pi) \leq d^k    
\end{gathered}
\end{equation}
\end{stage}
\begin{stage}[\textbf{min-cost}]
\label{stage:min-cost}
Given the reward budget $g^k$, minimize the cost return while keeping the reward return above the reward budget. $\pi^{g^k}_{min}$ is the solution of this problem, and $d^{k+1}=J_C(\pi^{g^k}_{min})$.
\begin{equation}
\begin{gathered}
    d^{k+1}=\min_\pi J_{C}(\pi) \\
    s.t. \;\; J_{R}(\pi) \geq g^k    
\end{gathered}
\end{equation}
\end{stage}

These two stages are solved alternately in the way of constrained policy update mentioned in Section \ref{section:constrained_policy_update}.
After solving the constrained problem in the current stage, the cost budget $d$ or reward budget $g$ is updated and will be used in the next stage. During this process, the generated trajectories of cost budgets $\{d^k\}_{k=0}^n$ and reward budgets $\{g^k\}_{k=0}^n$ can achieve monotonic updates according to \Cref{theorem:max-min}.
\begin{theorem} \label{theorem:max-min}
    By solving these two stages alternately, we get trajectories of policies $\{\pi^{d^0}_{max}, \pi^{g^0}_{min}, \pi^{d^1}_{max}, \pi^{g^1}_{min}, \cdots,\}$, cost budgets $\{d^k\}_{k=0}^n$, and reward budgets $\{g^k\}_{k=0}^n$. The worst case of update performance can be bounded by
    \begin{gather}
         g^k - g^{k-1} \geq  - \frac{\sqrt{2\delta}\gamma}{(1-\gamma)^2}(n_1 + n_2) \epsilon_R  \\
         d^{k+1} - d^k \leq  \frac{\sqrt{2\delta}\gamma}{(1-\gamma)^2}(n_2 + n_1) \epsilon_C
    \end{gather}
    where $n_1, n_2$ are the maximum iteration steps of max-reward stage and min-cost stage respectively. $\epsilon_R = \max_k \max_s |\mathbb{E}_{a\sim \pi_k}[A^{\pi_{k-1}}_R(s,a)]|,\; \epsilon_C = \max_k \max_s |\mathbb{E}_{a\sim \pi_k}[A^{\pi_{k-1}}_C(s,a)]|$. 
\end{theorem}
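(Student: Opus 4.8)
The plan is to track how the reward return $J_R$ evolves along the entire policy trajectory that connects $\pi^{d^{k-1}}_{max}$ (where $J_R = g^{k-1}$) to $\pi^{d^k}_{max}$ (where $J_R = g^k$), controlling the change one trust-region update at a time via \Cref{lemma:CPO_bound}. This trajectory consists of the $n_2$ inner iterations of the min-cost stage that produce $\pi^{g^{k-1}}_{min}$, followed by the $n_1$ inner iterations of the max-reward stage that produce $\pi^{d^k}_{max}$, so there are at most $n_1 + n_2$ elementary updates to bound. For each update $\pi_i \to \pi_{i+1}$ I would apply the lower bound of \Cref{lemma:CPO_bound}, use the trust-region constraint $\bar{D}_{KL}(\pi_{i+1}||\pi_i) \leq \delta$ to replace the KL term by $\sqrt{\delta}$, and use that $\epsilon_R$ is by definition a uniform upper bound on the per-step constant $\epsilon^{\pi_{i+1}} = \max_s|\mathbb{E}_{a\sim\pi_{i+1}}[A^{\pi_i}_R(s,a)]|$, so that a single coefficient governs every summand.

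The core of the argument is showing that one update never lets the reward drop by more than $\frac{\sqrt{2\delta}\gamma}{(1-\gamma)^2}\epsilon_R$. In the max-reward stage this is immediate: since the current iterate $\pi_i$ is itself feasible and yields zero surrogate advantage with zero KL divergence, optimality of the constrained subproblem forces the surrogate reward advantage $\mathbb{E}_{s\sim d_{\pi_i}, a\sim\pi_{i+1}}[A^{\pi_i}_R(s,a)]$ to be nonnegative, whereupon the lower bound of \Cref{lemma:CPO_bound} gives $J_R(\pi_{i+1}) - J_R(\pi_i) \geq -\frac{\sqrt{2\delta}\gamma}{(1-\gamma)^2}\epsilon_R$. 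In the min-cost stage the reward is not the objective but a constraint, so the per-step guarantee must instead be extracted from the surrogate reward constraint $J_R(\pi_i) + \frac{1}{1-\gamma}\mathbb{E}_{s\sim d_{\pi_i}, a\sim\pi_{i+1}}[A^{\pi_i}_R(s,a)] \geq g^{k-1}$; substituting this into the lower bound of \Cref{lemma:CPO_bound} confines each iterate to within $\frac{\sqrt{2\delta}\gamma}{(1-\gamma)^2}\epsilon_R$ of the budget and thereby bounds the per-step reward decrease by the same quantity.

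With a uniform per-step decrease bound in hand, I would sum (telescope) the inequalities over all $n_1 + n_2$ updates along the trajectory to obtain $g^k - g^{k-1} \geq -\frac{\sqrt{2\delta}\gamma}{(1-\gamma)^2}(n_1 + n_2)\epsilon_R$, which is the first claim. The second claim follows by an entirely symmetric argument with the roles of reward and cost, maximization and minimization, and the lower and upper bounds of \Cref{lemma:CPO_bound} interchanged: along the trajectory from $\pi^{g^{k-1}}_{min}$ to $\pi^{g^k}_{min}$ (the $n_1$ max-reward iterations followed by the $n_2$ min-cost iterations), each update raises the cost by at most $\frac{\sqrt{2\delta}\gamma}{(1-\gamma)^2}\epsilon_C$ — in the min-cost stage because minimization forces the surrogate cost advantage to be nonpositive, and in the max-reward stage because the surrogate cost constraint $J_C(\pi_i) + \frac{1}{1-\gamma}\mathbb{E}_{s\sim d_{\pi_i}, a\sim\pi_{i+1}}[A^{\pi_i}_C(s,a)] \leq d^k$ caps the increase — and telescoping over $n_1 + n_2$ steps yields $d^{k+1} - d^k \leq \frac{\sqrt{2\delta}\gamma}{(1-\gamma)^2}(n_1 + n_2)\epsilon_C$.

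I expect the main obstacle to be the min-cost stage's reward bookkeeping, and symmetrically the max-reward stage's cost bookkeeping: there the quantity being controlled is governed by a surrogate \emph{constraint} rather than the surrogate objective, so the nonnegativity-of-advantage shortcut is unavailable and one must argue that the constraint together with the approximation gap in \Cref{lemma:CPO_bound} still keeps each true iterate within one step-size of the budget. A secondary subtlety is verifying that $\epsilon_R$ and $\epsilon_C$, defined as maxima over all iterations and states, legitimately dominate every per-step constant $\epsilon^{\pi_{i+1}}$, so that the common coefficient can be factored out of each summand before telescoping.
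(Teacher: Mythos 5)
Your proposal is correct and follows essentially the same route as the paper: the max-reward stage is handled by the CPO worst-case per-update bound (optimality forces a nonnegative surrogate reward advantage), the min-cost stage by the surrogate reward constraint keeping each iterate near the budget $g^{k-1}$, the two are chained over $n_1+n_2$ updates, and the cost bound is obtained symmetrically — exactly the structure of the paper's Lemmas on the max-reward and min-cost stages. The one imprecision is the claim that the constraint "bounds the per-step reward decrease by the same quantity" in the min-cost stage — what it actually controls is how far each iterate can fall \emph{below} $g^{k-1}$, not the step-to-step decrease — but since that stage starts at $J_R = g^{k-1}$ this endpoint bound is precisely what the final combination needs, and it is the same argument the paper makes.
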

\begin{proof}
    The proof can be found in \Cref{section:adversarial_update_bound}.
\end{proof}

Different from most algorithms of Constrained RL, our algorithm additionally optimize the cost budget during the training process. Min-cost stage can be regarded as an adaptive technique to balance the effect of reward and cost signal during the training. When cost return is lower than cost budget, the budget will be adaptively decreased to a certain degree in which the policy can be guided mildly by moderate constraint violations.

However, these two stages alone can not guarantee that a solution satisfying desired constraint $J_C \leq d_{des}$ will eventually be obtained. The policy may converge to a local Pareto-optimal solution $(g^*, d^*)$ where $d^* > d_{des}$ or $d^* \ll d_{des}$. The former case can not satisfy the requirement of the task, and the latter will yield an over-conservative policy. To control the convergence direction of the cost budget, we develop an policy projection stage. 

\begin{algorithm}[t]
   \caption{Adversarial Constrained Policy Optimization}
\begin{algorithmic} \label{algo:acpo}
    \STATE Initialize $\pi^0 \in \Pi_\theta$, $d^0 > d_{des}$, $g^0 = 0$ and a trajectory buffer $\mathcal{B}$.
    \FOR{$k=0,1,2,\cdots$}
        \STATE Run $\pi^k$ and store trajectories in $\mathcal{B}$.
        \IF{$(d^k, g^k)$ converge}
            \IF{$d^k = d_{des}$}
                \STATE \textbf{Return} the policy $\pi^k$.
            \ENDIF
            \STATE Update policy $\pi^{k+1}$ and cost budget $d^{k+1}$ using Stage \ref{stage:projection}.
        \ELSE
            \STATE Update policy $\pi^{k+1}$ and budgets using Stage \ref{stage:max-reward} and \ref{stage:min-cost} alternately.
        \ENDIF

    \ENDFOR 

\end{algorithmic}
\end{algorithm}

\subsection{Policy Projection} \label{section:projection}

If the policy converge to a solution where the desired cost budget is not satisfied, then the following projection stage will be executed:
\begin{stage}[\textbf{projection}]
\label{stage:projection}
Let $D$ be a distance measure between two policies, $\pi_{old}$ is the converged policy after solving stage 1 and stage 2 alternately, $d_{old} = J_C(\pi_{old})$. We solve the following problem:
\begin{equation}
\begin{gathered}
    \pi_{new} = \arg\min_{\pi} D(\pi||\pi_{old})  \label{eq:distance_measurement} \\
    s.t. \;\; J_{C}(\pi) \leq d_{old} + \Delta d    
\end{gathered}
\end{equation}
Denote the solution of this problem as $\pi_{new}$, and $d_{new}=d_{old} + \Delta d$. $\Delta d$ can be controlled using a feedback controller, e.g. $\Delta d$ is calculated by the positional error between converged budget $d_{old}$ and desired value $d_{des}$:
\begin{equation}
    \Delta d = k_p (d_{des} - d_{old})  \label{eq:pd_control}
\end{equation}
where $k_p$ is a positive constant.
\end{stage}
If the policy converge to a solution where $d^* \ll d_{des}$, that means the policy is too conservative, we will only enlarge the cost budget by \Cref{eq:pd_control} without projection.

The aim of projection stage is to change the convergence direction of the policy along the Pareto front. When the policy converges while the desired budget does not satisfied, the policy will be projected to a tighter feasible region with shortest distance. After projection, the cost budget is updated and the policy is re-initialized, and the algorithm will continue to solve max-reward and min-cost stages alternately until the next convergence point appears.

\Cref{algo:acpo} shows the corresponding pseudocode of overall algorithm.

\section{Practical Implementation}
\label{section:practical_implementation}

\subsection{Solving the constrained problem}
Our proposed policy optimization strategy can be implemented by any any algorithm based on the framework of (\ref{eq:optimization_formulation}).
Instead of using second-order methods to solve this problem, we prefer to solve it by unconstrained gradient methods in order to decrease the computation complexity.
In this paper, we use interior-point policy optimization proposed by \cite{ipo} to solve (\ref{eq:optimization_formulation}). The interior point method transforms a constrained optimization problem into an unconstrained problem by constructing the penalty function. 
Using penalty function, (\ref{eq:optimization_formulation}) can be transformed into the following form:
\begin{equation}
\label{eq:ipo_optimization}
\begin{gathered}
    \pi_{k+1} = \arg\max_{\pi\in\Pi_{\theta}} \mathop{\mathbb{E}}\limits_{s\sim d_{\pi_k} \atop a\sim \pi}[A^{\pi_k}_R(s,a)]  +  \phi(\Tilde{J_C}(\pi))  \\
    \mathrm{s.t.}\;\; 
    \bar{D}_{KL}(\pi||\pi_k) \leq \delta
\end{gathered}
\end{equation}
where $\phi(x) = \frac{\log(-x)}{t}, t>0$ is a barrier function, and
\begin{equation}
    \Tilde{J_C}(\pi) =  J_C(\pi_k) + \frac{1}{1-\gamma}\mathop{\mathbb{E}}\limits_{s\sim d_{\pi_k} \atop a\sim \pi} [A^{\pi_k}_C(s,a)] - d
\end{equation}
The error between solutions of (\ref{eq:optimization_formulation}) and (\ref{eq:ipo_optimization}) can be bounded according to the following theorem:
\begin{theorem}
    \label{theorem:ipo_bound}
    Denote $\pi^*, \pi^*_{IPO}$ as the optimal solutions of (\ref{eq:optimization_formulation}) and (\ref{eq:ipo_optimization}) respectively. Assume $\pi^*_{IPO}$ is feasible for the constraint in (\ref{eq:optimization_formulation}), then the following bound holds
    \begin{equation}
     0 \leq \mathop{\mathbb{E}}\limits_{s\sim d_{\pi_k} \atop a\sim \pi^*}[A_R^{\pi_k}(s,a)] - \mathop{\mathbb{E}}\limits_{s\sim d_{\pi_k} \atop a\sim \pi^*_{IPO}}[A_R^{\pi_k}(s,a)] \leq \frac{1}{t}
    \end{equation}
\end{theorem}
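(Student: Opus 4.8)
The plan is to prove the two inequalities separately: the lower bound is immediate from optimality and the assumed feasibility, while the upper bound is a duality-gap estimate for the logarithmic barrier. Throughout I would write $L(\pi) = \mathop{\mathbb{E}}_{s\sim d_{\pi_k}, a\sim\pi}[A_R^{\pi_k}(s,a)]$ for the reward surrogate, $g(\pi) = \Tilde{J_C}(\pi)$ for the constraint functional of (\ref{eq:optimization_formulation}), and $\mathcal{K} = \{\pi : \bar{D}_{KL}(\pi||\pi_k)\leq\delta\}$ for the trust region shared by both problems. The relevant structural facts are that $L$ and $g$ are affine in the action distributions $\pi(\cdot|s)$ and that $\mathcal{K}$ is convex.

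First the lower bound. Since $\pi^*$ maximizes $L$ over the feasible set $\{\pi\in\mathcal{K}: g(\pi)\leq 0\}$, and by hypothesis $\pi^*_{IPO}$ lies in this set, optimality gives $L(\pi^*)\geq L(\pi^*_{IPO})$, which is exactly the nonnegativity of the difference.

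For the upper bound I would run the standard barrier duality argument. The barrier solution $\pi^*_{IPO}$ maximizes $L(\pi) + \phi(g(\pi))$ over $\mathcal{K}$, which forces strict feasibility $g(\pi^*_{IPO})<0$. Define the effective multiplier $\lambda^* = -\phi'(g(\pi^*_{IPO})) = \frac{1}{-t\,g(\pi^*_{IPO})} > 0$; differentiating $\phi(x)=\frac{\log(-x)}{t}$ gives $\phi'(x)=\frac{1}{tx}$ and hence the complementarity identity $\lambda^*\,g(\pi^*_{IPO}) = -\frac1t$. At $\pi^*_{IPO}$ the first-order condition for the barrier objective coincides with that for the Lagrangian $L(\pi) - \lambda^* g(\pi)$, because there $\nabla[\phi(g(\pi))] = \phi'(g(\pi^*_{IPO}))\nabla g = -\lambda^*\nabla g$. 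Since $L - \lambda^* g$ is affine (hence concave) and $\mathcal{K}$ is convex, this stationary point is a global maximizer over $\mathcal{K}$, so that
\[
\max_{\pi\in\mathcal{K}}\big[L(\pi) - \lambda^* g(\pi)\big] = L(\pi^*_{IPO}) - \lambda^* g(\pi^*_{IPO}) = L(\pi^*_{IPO}) + \frac1t .
\]
Weak duality for the cost constraint then closes the argument: for any $\lambda\geq 0$ and any $\pi$ feasible for (\ref{eq:optimization_formulation}) one has $L(\pi)\leq L(\pi) - \lambda g(\pi)\leq \max_{\pi'\in\mathcal{K}}[L(\pi')-\lambda g(\pi')]$, since $-\lambda g(\pi)\geq 0$; taking the maximum over feasible $\pi$ gives $L(\pi^*)\leq \max_{\pi\in\mathcal{K}}[L(\pi)-\lambda g(\pi)]$. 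Applying this with $\lambda=\lambda^*$ yields $L(\pi^*)\leq L(\pi^*_{IPO}) + \frac1t$, the claimed upper bound.

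The main obstacle is the step asserting that the barrier stationary point is a \emph{global} maximizer of the Lagrangian, which is precisely where convexity is needed: for a nonconvex parametrized family $\Pi_\theta$ the image $\theta\mapsto\pi_\theta$ need not be convex and this can fail. I would resolve it by carrying out the argument over the full policy space, where $L$ and $g$ are genuinely affine in $\pi(\cdot|s)$ and $\mathcal{K}$ is convex (consistent with the surrogate formulation inherited from Lemma \ref{lemma:CPO_bound}), so that stationarity together with concavity does give global optimality and weak duality holds with no gap beyond the $\frac1t$ barrier term. I would also record explicitly that the interior-point construction guarantees $g(\pi^*_{IPO})<0$, so that $\lambda^*$ and the complementarity identity $\lambda^* g(\pi^*_{IPO}) = -\frac1t$ are well defined.
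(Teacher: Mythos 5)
Your proof is correct and takes essentially the same route as the paper's: the log-barrier stationarity condition produces the dual-feasible multiplier $\lambda^*=-1/\bigl(t\,\widetilde{J}_C(\pi^*_{IPO})\bigr)$, whose dual value exceeds $L(\pi^*_{IPO})$ by exactly $1/t$, and weak duality together with the assumed feasibility of $\pi^*_{IPO}$ yields both inequalities. The only differences are cosmetic --- you keep the KL constraint as the domain $\mathcal{K}$ where the paper dualizes it with a second multiplier $\beta$, and your explicit caveat that concavity over the full (unparametrized) policy class is needed for the barrier stationary point to be a \emph{global} maximizer of the Lagrangian is a point the paper's proof leaves implicit.
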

\begin{proof}
    The proof can be found in \Cref{section:interior-optimal-bound}. 
\end{proof}

By using the form of (\ref{eq:ipo_optimization}) to solve Stage \ref{stage:max-reward} and Stage \ref{stage:min-cost} alternately, we can also get monotonic performance update bound like \Cref{theorem:max-min}:
\begin{proposition}
    By solving these two stages alternately using (\ref{eq:ipo_optimization}), we get trajectories of policies $\{\pi^{d^0}_{max}, \pi^{g^0}_{min}, \pi^{d^1}_{max}, \pi^{g^1}_{min}, \cdots,\}$, cost budgets $\{d^k\}_{k=0}^n$, and reward budgets $\{g^k\}_{k=0}^n$. The worst case of update performance can be bounded by:
    \begin{gather}
         g^k - g^{k-1} \geq  - \frac{n_1}{(1-\gamma)t} - \frac{\sqrt{2\delta}\gamma}{(1-\gamma)^2}(n_1 + n_2) \epsilon_R  \\
         d^{k+1} - d^k \leq  \frac{n_2}{(1-\gamma)t} + \frac{\sqrt{2\delta}\gamma}{(1-\gamma)^2}(n_2 + n_1) \epsilon_C
    \end{gather}
    where $n_1, n_2$ are the maximum iteration steps of max-reward stage and min-cost stage respectively. $\epsilon_R = \max_k \max_s |\mathbb{E}_{a\sim \pi_k}[A^{\pi_{k-1}}_R(s,a)]|,\; \epsilon_C = \max_k \max_s |\mathbb{E}_{a\sim \pi_k}[A^{\pi_{k-1}}_C(s,a)]|$.
\end{proposition}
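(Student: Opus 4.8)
The plan is to reduce this to \Cref{theorem:max-min} by tracking, step by step, the additional suboptimality that appears when each constrained update is solved through the interior-point surrogate (\ref{eq:ipo_optimization}) rather than the exact problem (\ref{eq:optimization_formulation}). Recall that the proof of \Cref{theorem:max-min} writes the worst-case change of the reward budget $g^k - g^{k-1}$ (resp.\ the cost budget $d^{k+1}-d^k$) as a telescoping sum of per-step increments $J_R(\pi_{i+1}) - J_R(\pi_i)$ over the $n_2$ iterations of the min-cost stage and the $n_1$ iterations of the max-reward stage, each increment being lower-bounded through \Cref{lemma:CPO_bound} by $\frac{1}{1-\gamma}\mathbb{E}_{s\sim d_{\pi_i},\,a\sim\pi_{i+1}}[A_R^{\pi_i}(s,a)] - \frac{\sqrt{2\delta}\gamma}{(1-\gamma)^2}\epsilon_R$. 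In \Cref{theorem:max-min} the surrogate-advantage term is nonnegative at every max-reward step (the incumbent $\pi_i$ is feasible and attains value $0$, so the exact maximizer attains at least $0$) and is controlled by the active reward constraint during the min-cost stage; summing the residual KL penalties over all $n_1+n_2$ steps gives the stated bound.

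First I would quantify the effect of replacing the exact maximizer by the interior-point maximizer at a single max-reward step. By \Cref{theorem:ipo_bound}, the surrogate reward advantage realized by $\pi^*_{IPO}$ obeys $\mathbb{E}_{\pi^*_{IPO}}[A_R^{\pi_i}] \geq \mathbb{E}_{\pi^*}[A_R^{\pi_i}] - \frac{1}{t} \geq -\frac{1}{t}$, where the last inequality again uses that the exact optimum is nonnegative. Hence the per-step lower bound degrades from $-\frac{\sqrt{2\delta}\gamma}{(1-\gamma)^2}\epsilon_R$ to $-\frac{1}{(1-\gamma)t} - \frac{\sqrt{2\delta}\gamma}{(1-\gamma)^2}\epsilon_R$ at each of the $n_1$ max-reward steps. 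Summing these degraded increments over the $n_1$ max-reward steps and the $n_2$ min-cost steps (the latter unchanged, since there the reward is held up by its constraint rather than by the barrier objective) reproduces the KL term $\frac{\sqrt{2\delta}\gamma}{(1-\gamma)^2}(n_1+n_2)\epsilon_R$ of \Cref{theorem:max-min} and contributes the extra $\frac{n_1}{(1-\gamma)t}$, yielding the first inequality.

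Next I would repeat the argument symmetrically for the cost budget. The min-cost stage maximizes $-A_C$ under a reward constraint, so the cost-side analogue of \Cref{theorem:ipo_bound} states that the interior-point solution's surrogate cost advantage exceeds the exact minimum by at most $\frac{1}{t}$; propagating this through the $n_2$ min-cost steps adds $\frac{n_2}{(1-\gamma)t}$ to the worst-case cost increase, while the max-reward steps contribute only the KL penalty, which recovers the second inequality.

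The main obstacle I anticipate is the bookkeeping that \emph{isolates} the interior-point error to the ``objective'' stage of each budget: I must verify that, during the stage in which the tracked quantity enters only as a constraint, the barrier introduces no additional slack in the relevant direction — the barrier keeps every iterate strictly feasible, so the reward constraint during min-cost and the cost constraint during max-reward are never violated and need no $\frac{1}{t}$ correction. A secondary point is to state the cost-side version of \Cref{theorem:ipo_bound} precisely, tracking the sign: since minimizing $J_C$ corresponds to maximizing $-A_C$, the barrier error enters with a $+\frac{1}{t}$ on the cost. Once these are settled, both bounds follow by combining the per-step estimates exactly as in the proof of \Cref{theorem:max-min}.
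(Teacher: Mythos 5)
Your proposal is correct and takes essentially the same route as the paper's proof in \Cref{section:update-bound-ipo}: both insert the $\tfrac{1}{t}$ suboptimality of \Cref{theorem:ipo_bound} into the per-step bound of \Cref{lemma:CPO_bound} only at the iterations where the tracked return is the barrier-penalized objective (yielding $-\tfrac{1}{(1-\gamma)t}-\tfrac{\sqrt{2\delta}\gamma}{(1-\gamma)^2}\epsilon$ per step), leave the constraint-side stage's contribution unchanged because the log barrier keeps the surrogate constraint strictly feasible, and then combine the stages exactly as in the proof of \Cref{theorem:max-min}.
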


In our experiments, the problem (\ref{eq:ipo_optimization}) is solved using PPO \cite{ppo}. Details about algorithm implementation are provided in \Cref{algo:details}. 

\subsection{Multi-constraint scenario}
For the multi-constraint scenario, the objective of min-cost stage can be expanded to the following form:
\begin{equation}
\begin{gathered}
    \min_\pi \sum_i J_{C_i}(\pi) \cdot \mathbb{I}(J_{C_i}(\pi) > d_{des}) \\
    s.t.\; J_R(\pi) \geq g^k
\end{gathered}
\end{equation}
where $\mathbb{I}(\cdot)$ denotes the indicator function. The objective function is the sum of cost returns which do not satisfy the desired budget. If all costs are below the desired budget, the min-cost stage will be skipped. 

\section{Experiments}
\label{section:experiments}

In this section, to evaluate the performance of our method, we conduct the following experiments:

\begin{enumerate}
    \item \textbf{Safety Gymnasium task:} Experiments on Safety Gymnasium environments are conducted to demonstrate that our method achieves better performance compared with other baselines while satisfying the constraints.

    \item \textbf{Quadruped locomotion task:} Experiments on legged robot with multi-constraints are conducted to demonstrate the effectiveness of our method on enhancing performances in high dimensional real-world scenario. 

    \item \textbf{Adapting budgets:} 
    We compare the performances of our algorithm with baseline method using curriculum budgets during the training process in the quadruped locomotion task, in order to showcase the benefits of alternating two stage iteration in our algorithm.

\end{enumerate}

\begin{table*}[t]
\centering
\caption{The performances of different algorithms in Car tasks. We conduct 5 experiments using random seeds on every environment.
Results are based on the episode returns of 10 evaluation rollouts. In these experiments, the desired cost budget is set to 25. 
The bold value of cost return is closest to the desired budget, and the bold value of the reward return represents the highest one among these solutions.}
\label{table:safetygym-car}

\begin{tabular}{c|cc|cc}
\hline
                      & \multicolumn{2}{c|}{\textbf{CarGoal1}}         & \multicolumn{2}{c}{\textbf{CarCircle1}}                  \\ \hline
\multicolumn{1}{l|}{} & $J_R$             & $J_C$             & $J_R$             & $J_C$                       \\ \hline
\textbf{Ours}         & $\mathbf{26.50} \pm 2.70$  & $\mathbf{26.90} \pm 0.14$  & $\mathbf{14.01} \pm 0.16$  & $\mathbf{24.30} \pm 2.37$              \\
\textbf{PPO-Lag}      & $20.68 \pm 5.08$  & $21.02 \pm 4.50$  & $13.11 \pm 2.65$  & $151.56 \pm 72.57$              \\
\textbf{IPO}          & $25.52 \pm 0.49$  & $27.51 \pm 2.10$  & $13.94 \pm 0.07$  & $26.75 \pm 2.18$              \\
\textbf{CPO}          & $19.79 \pm 1.13$  & $27.69 \pm 0.30$  & $13.81 \pm 0.65$  & $35.06 \pm 13.84$              \\
\textbf{CRPO}         & $22.61 \pm 0.75$  & $30.41 \pm 0.96$  & $13.38 \pm 0.37$  & $33.24 \pm 2.16$              \\
\hline
\end{tabular}
\end{table*}

\begin{table*}[h]
\centering
\caption{The performances of different algorithms in Point tasks. In these experiments, the desired cost budget is set to 25. 
The bold value of cost return is closest to the desired budget, and the bold value of the reward return represents the highest one among these solutions.}
\label{table:safetygym-point}

\begin{tabular}{c|cc|cc}
\hline
                      & \multicolumn{2}{c|}{\textbf{PointGoal1}}                  & \multicolumn{2}{c}{\textbf{PointCircle1}}                  \\ \hline
\multicolumn{1}{l|}{} & $J_R$                      & $J_C$                      & $J_R$                      & $J_C$                       \\ \hline
\textbf{Ours}         & $22.41 \pm 1.32$           & $26.72 \pm 0.90$           & $27.08 \pm 1.14$           & $1.12 \pm 0.77$              \\
\textbf{PPO-Lag}      & $\mathbf{23.26} \pm 0.48$  & $33.52 \pm 2.64$           & $24.78 \pm 10.66$          & $91.98 \pm 105.46$              \\
\textbf{IPO}          & $20.80 \pm 2.56$           & $27.88 \pm 3.44$           & $31.51 \pm 4.62$           & $6.24 \pm 7.29$              \\
\textbf{CPO}          & $17.21 \pm 3.11$           & $\mathbf{26.43} \pm 0.36$  & $32.69 \pm 0.46$           & $\mathbf{12.88} \pm 13.44$              \\
\textbf{CRPO}         & $17.78 \pm 0.22$           & $29.92 \pm 1.17$           & $\mathbf{32.75} \pm 15.80$ & $109.07 \pm 106.17$              \\
\hline
\end{tabular}
\end{table*}

\begin{figure*}[h!]
    \centering
    \begin{subfigure}
        \centering
        \includegraphics[width=\columnwidth]{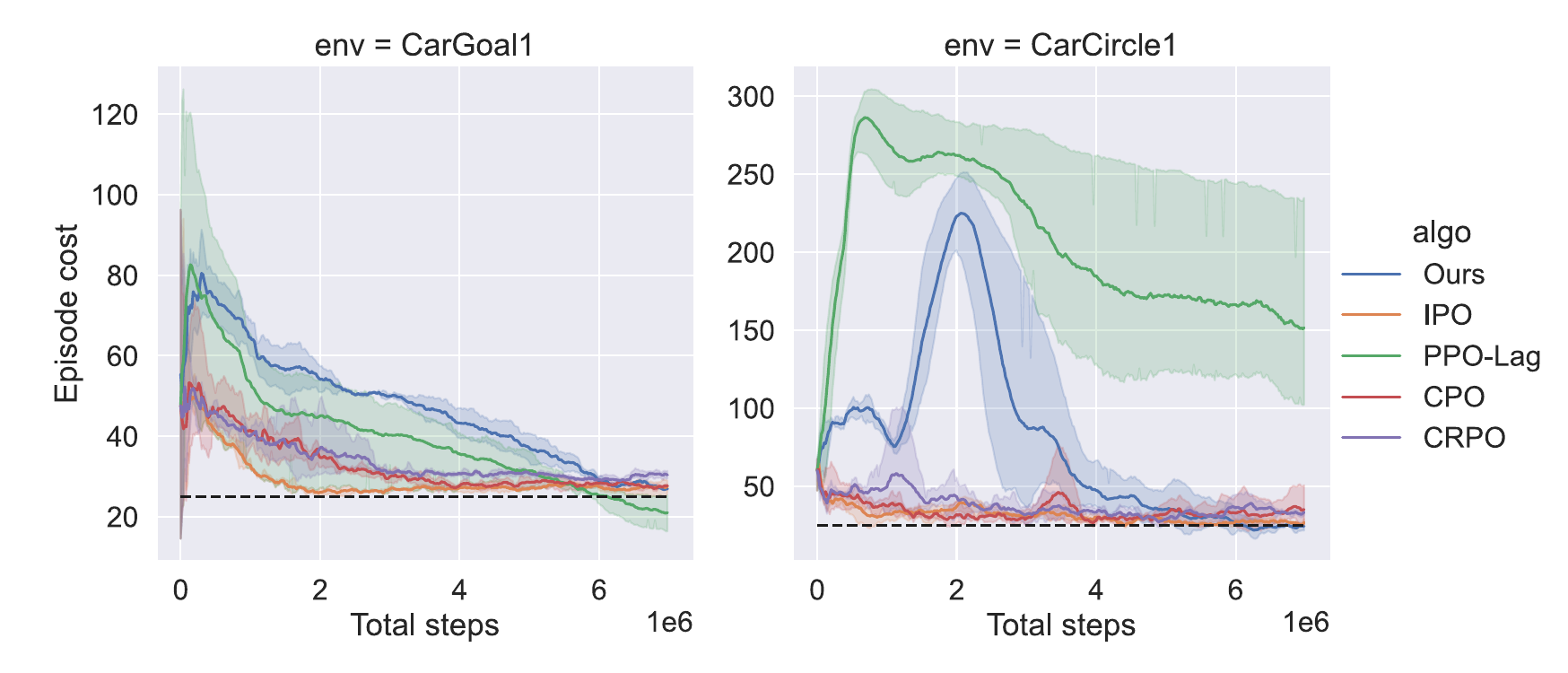}
    \end{subfigure} 
    \begin{subfigure}
        \centering
        \includegraphics[width=\columnwidth]{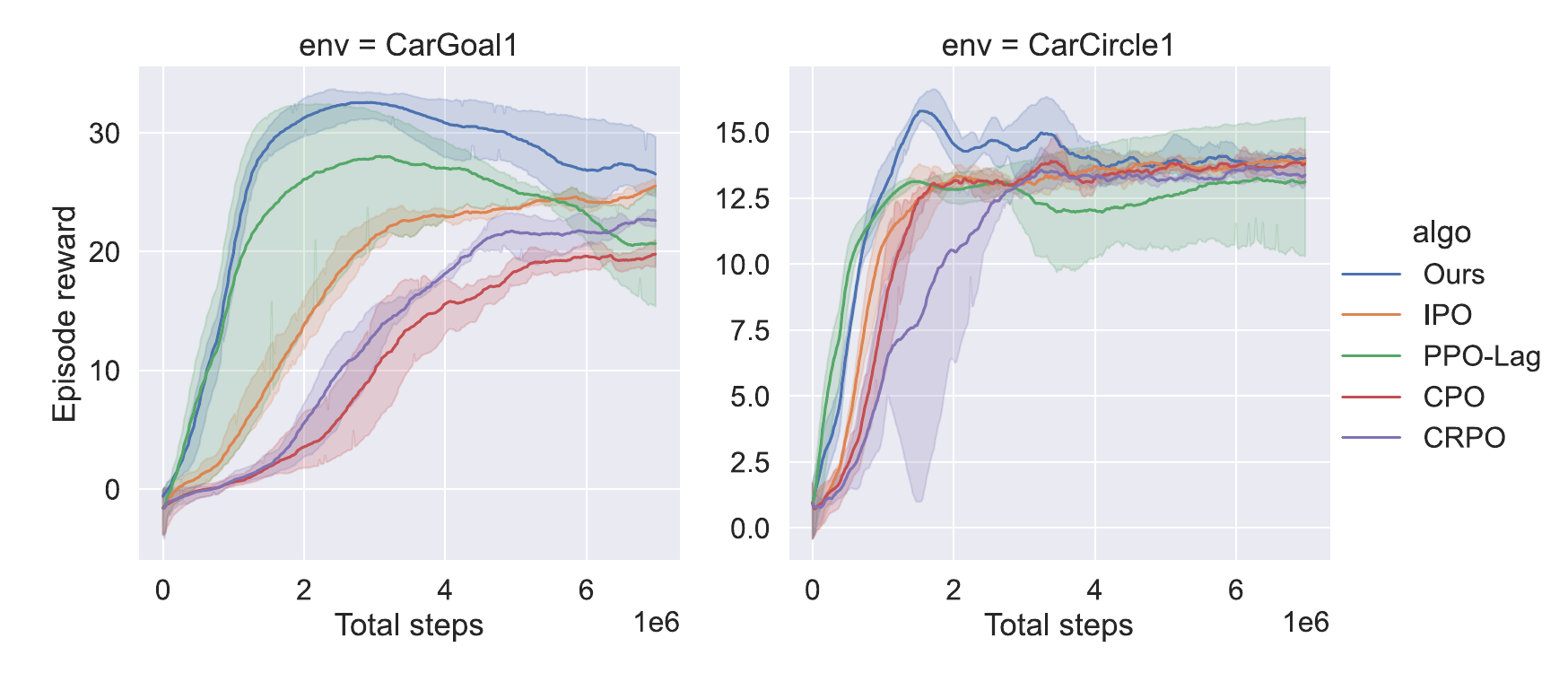}
    \end{subfigure}    
    
    \begin{subfigure}
        \centering
        \includegraphics[width=\columnwidth]{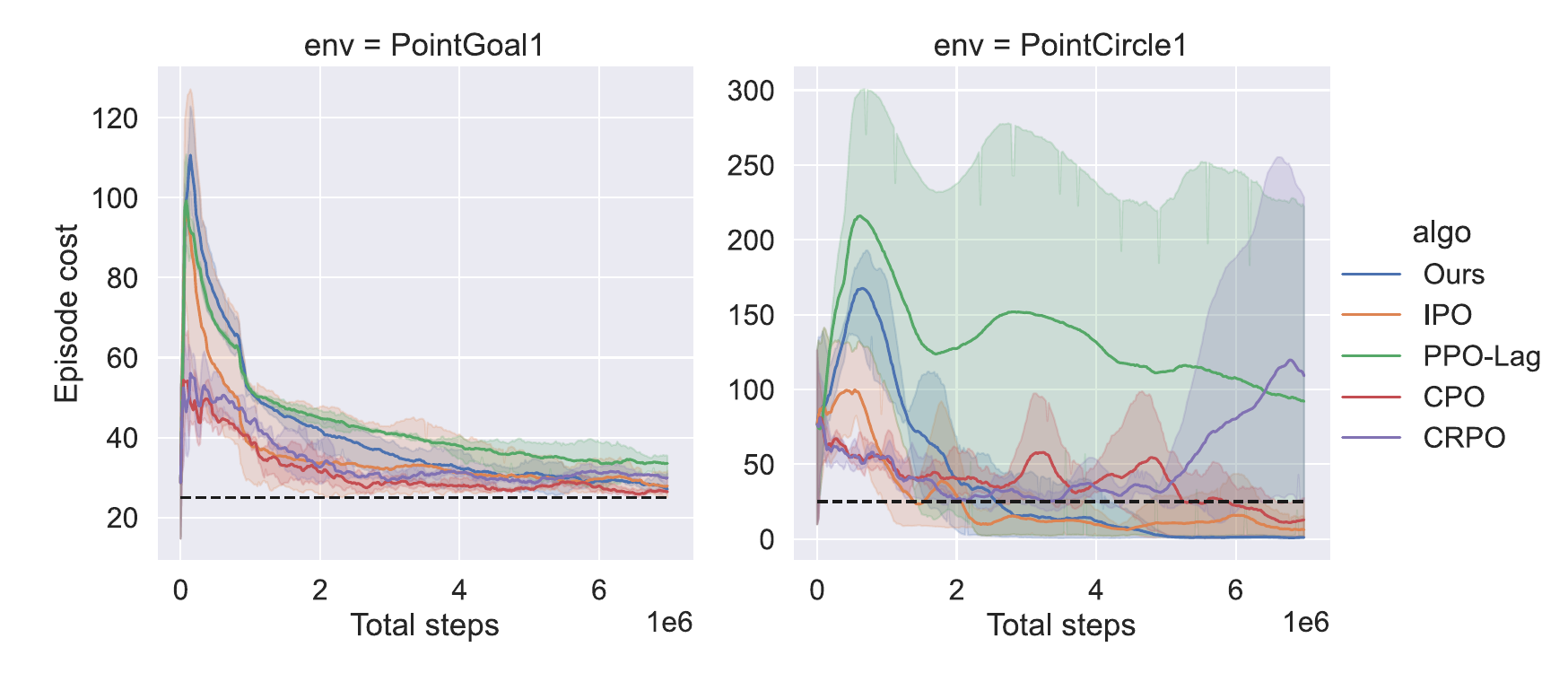}
    \end{subfigure} 
    \begin{subfigure}
        \centering
        \includegraphics[width=\columnwidth]{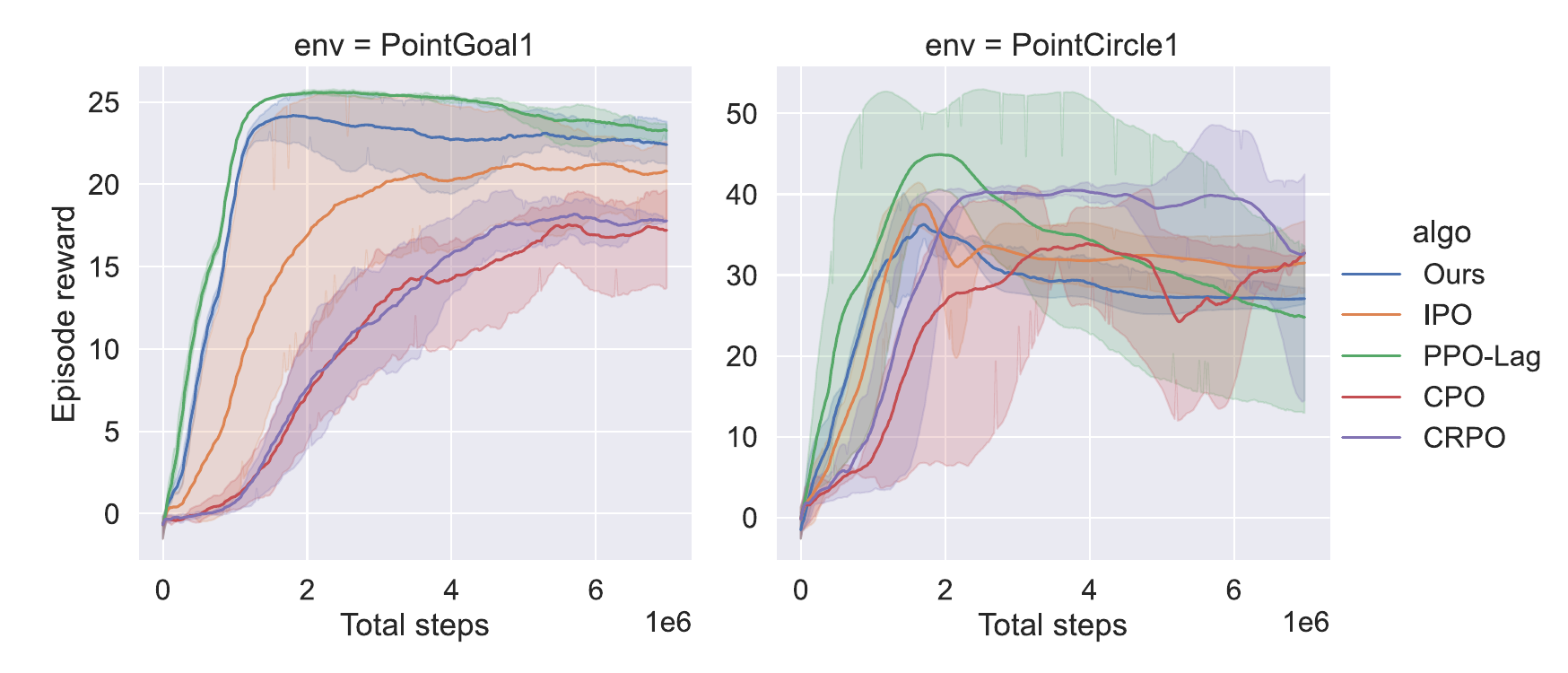}
    \end{subfigure}    
    \caption{The learning curve of different algorithms in Safety Gymnasium environments. The dashed line represents the desired cost budget. }
    \label{fig:safetygym}
\end{figure*}

\subsection{Environment Setup}

\subsubsection{Single constraint benchmark task}

We use Safety Gymnasium \cite{safety-gym} to compare the performance of our algorithm with other baselines. We conduct experiments on robotic navigation tasks (\texttt{Goal1}, \texttt{Circle1}) using two different agents (\texttt{Point}, \texttt{Car}). More details about these environments are provided in \Cref{appendix:safety-gym-env}.  

\subsubsection{Multi-constraint robotic task}

To demonstrate the effectiveness of our algorithm about performance improvement in high-dimensional complex tasks, we run our algorithm and other first-order methods in the locomotion task of Go1 quadruped robot, which is shown in \Cref{fig:leggedrobot-env}. 
In this environment, the goal of the robot is to track the desired velocity command while satisfying some task-specific constraints. At every control step, the agent outputs desired positions of 12 joint motors, and these desired positions are send to low-level PD controllers to generate desired torques of motors. The observation and action space are listed in \Cref{table:leggedrobot-observation_space}.

In this environment, the reward function is designed as the weighted sum of several parts: $R_t = \sum_{i=1}^{n_r} w_i R^i_t$, where $w_i, R^i_t$ denote the $i$th weight and reward respectively, which are listed in \Cref{table:leggedrobot-rewards}. The cost functions are designed as the following form: $C^i_t = B(c^i_t), \quad \forall i=1,\cdots,n_c $, where $c^i_t$ are one step cost functions, $B$ is a barrier function:
\begin{equation}
    B(c) = \frac{1+\mathrm{erf}(\frac{c-b}{\sigma})}{2}
\end{equation}
$b$ is the boundary of one step cost, $\sigma$ is the smoothing coefficient of the barrier function. Details about $c$ and its boundary $b$ are listed in \Cref{table:leggedrobot-costs}. 

\subsection{Baselines}

The baseline algorithms include: \textbf{IPO} \cite{ipo}: This algorithm uses interior point method to directly solve the constrained policy optimization problem without two stage iteration; \textbf{PPO-Lag} \cite{ray2019benchmarking}: This algorithm uses Lagrangian relaxation technique and optimizes the policy and the Lagrange multiplier simultaneously by gradient method; \textbf{CPO} \cite{cpo}: This algorithm approximates objective and constraint functions and solves a quadratic optimization problem like TRPO; \textbf{CRPO} \cite{crpo}:  This algorithm maximizes the reward and minimizes the cost violations alternately. It differs from our algorithm in that it uses unconstrained gradient method in each stage, whereas our algorithm models each stage as a constrained optimization sub-problem. 

We use implementations of these algorithms in Omnisafe \cite{omnisafe} project. Hyper-parameters of all algorithms are provided in \Cref{section:parameters_algorithms}. For a fair comparison, all algorithms use the same structure of networks, the same number of training samples, the same batch size.

\subsection{Results}

\begin{table*}[t]
\caption{The performances of different algorithms in the quadruped locomotion task. The desired cost budget $d$ of every cost type is set to 2. The episode reward is normalized relative to maximum episode length $1000$. The bold value represents the best solution which has the highest reward while satisfying desired constraints. IPO-C represents IPO using curriculum cost budgets.}
\label{table:leggedrobot-reward_and_cost}
\centering
\begin{tabular}{l|c|c|c|c}
\hline
                             & \textbf{Ours}             & \textbf{IPO-C}   & \textbf{IPO}     & \textbf{PPO-Lag}           \\ \hline
Episode reward               & $\mathbf{0.93} \pm 0.02$  & $0.86 \pm 0.03$  & $0.85 \pm 0.03$  & $0.81 \pm 0.08$            \\
Episode cost of action\_rate & $\mathbf{1.44} \pm 0.12$  & $1.52 \pm 0.13$  & $0.94 \pm 0.22$  & $0.29 \pm 0.09$            \\
Episode cost of ang\_vel\_xy & $\mathbf{1.69} \pm 0.13$  & $1.63 \pm 0.12$  & $1.47 \pm 0.10$  & $1.54 \pm 0.52$            \\
Episode cost of lin\_vel\_z  & $\mathbf{0.86} \pm 0.11$  & $0.60 \pm 0.20$  & $0.64 \pm 0.33$  & $1.04 \pm 0.27$            \\ \hline
\end{tabular}
\end{table*}

\begin{figure*}[t]
    \centering
    \includegraphics[width=\textwidth]{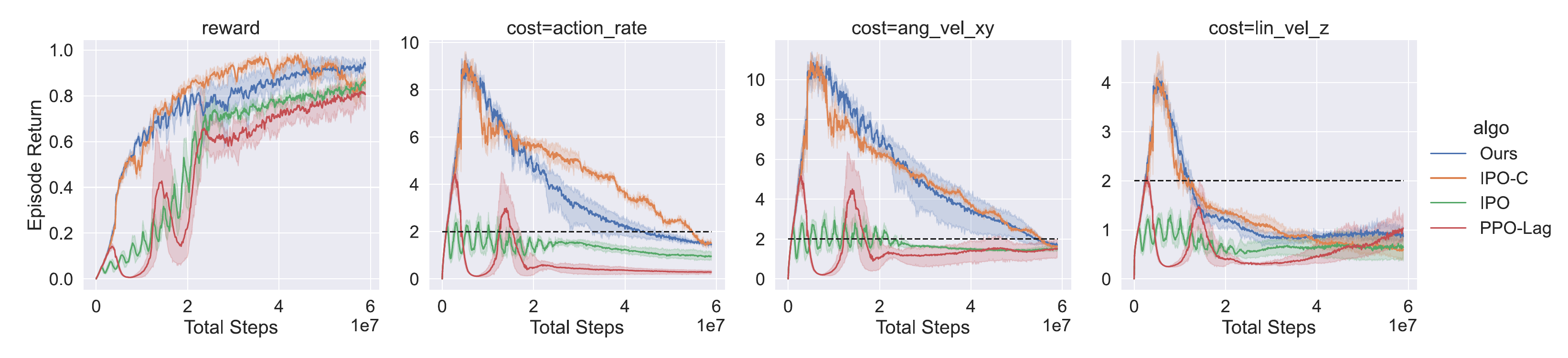}
    \caption{Episode reward and cost return of different algorithms in the quadruped locomotion task. Results are the mean value of 5 training experiments using random seeds. The dashed line represents the desired cost budget = 2. The episode reward in the figure are normalized relative to maximum episode length $1000$. }
    \label{fig:leggedrobot-reward_and_cost}
\end{figure*}

\subsubsection{Safety Gymnasium Results}
We compare the performances of our algorithm with other baselines in terms of constraint satisfaction and final reward returns. \Cref{table:safetygym-car}, \Cref{table:safetygym-point} and \Cref{fig:safetygym} demonstrate results of different algorithms. The higher the reward and the closer the cost is to the desired budget, the better the solution. 

In the CarGoal1 and CarCircle1 tasks, our algorithm outperforms other baselines in both tasks. With the same number of training samples, our algorithm has the highest reward and the closest cost value to the desired budget. In the PointGoal1 task, our algorithm outperforms IPO, CRPO on both reward and cost metrics. The cost error between ours and CPO is negligible, but the reward of ours is significantly higher than CPO's. The reward of ours is 30\% higher than CPO's. 
In the PointCircle1 task, our algorithm only outperforms PPO-Lag on both reward and cost metrics. Compared to the others, our algorithm gets a lower cost value, which results in a lower reward.  
We believe the reason is that this task is relatively simple and encouraging exploration by adapting budgets is not a significant impact on performance improvement. 

Thanks to large cost budget at the early phase of training, the agent is able to fully explore and boost the reward value, so the reward curve of our algorithm can increase faster than others at the early training stage. This phenomenon is similar with the result of PPO-Lag method. Moreover, our algorithm can achieve better cost results than PPO-Lag. The reason is that: When the policy get stuck in a local minima where the cost is above the desired budget, the projection stage takes over policy optimization process and project the policy towards the direction of cost decline. It helps to re-initialize the policy to the next iteration of min-cost and max-reward stages.

\subsubsection{Quadruped Locomotion Task Results}
In this task, we choose IPO, PPO-Lag as baseline algorithms because of their low computational complexity and ability to tackle multi constraints. This task involves totally 8 costs, but only 3 costs are strongly relative to the reward and others always have near zero values during the training. Therefore, Table \ref{table:leggedrobot-reward_and_cost} and Figure \ref{fig:leggedrobot-reward_and_cost} only demonstrate these 3 costs and the reward. Detailed results are provided in \Cref{section:leggedrobot-detailed-results}.

In this experiment, results show that all algorithms satisfy these constraints, so we only compare the reward returns. The higher the reward, the better the solution is. Results demonstrate that our algorithm achieves the highest reward among baselines. Our algorithm exhibits a 9\% increase in episode reward compared to IPO, and a 15\% increase compared to PPO-Lag, which demonstrates the efficiency of our algorithm in complex tasks.

\subsubsection{Comparison with curriculum budgets}
In this section, we aim to investigate the impact of alternating two-stage iteration on the performance enhancement. We compare the performance of our algorithm with the IPO using curriculum cost budgets, denoted as IPO-C. IPO-C uses IPO to solve the constrained optimization problem, and changes the cost budget gradually from an initial large value to the desired value during the training process. \Cref{fig:leggedrobot-budgetcost} illustrates the trajectories of cost budgets of IPO-C and our algorithm during training.

\Cref{table:leggedrobot-reward_and_cost} and \Cref{fig:leggedrobot-reward_and_cost} shows that our algorithm outperforms IPO-C with 8\% higher reward. Due to the loose constraints, IPO-C has a high reward during the early and middle phase of the training. As constraints are gradually tightened, the reward curve of IPO-C undergoes a drop, leading to a lower final results. While our algorithm achieves a near-monotonic increase of the reward. This can be attributed to the strategy of alternating two adversarial stages. In the min-cost stage, we not only minimize the cost return, but also maintain the current reward calculated by the max-reward stage, which can effectively mitigate the decrease of reward caused by cost minimization. Moreover, cost budgets can be adjusted adaptively by alternating two stages, which is preferable to the predefined budget trajectory because we usually do not know the priori information of the environment.

\section{Conclusion}

In this paper, we propose a new constrained policy optimization method which can optimize the reward and adapt cost budgets simultaneously. Experiments on toy environments and complex robotic tasks are conducted to
demonstrate the effectiveness of our method. Compared to the state-of-the-art methods, our method can achieve better reward return while satisfying constraints. In this paper, we use interior-point method to implement our algorithm, in the future, we aim to apply our method to other constrained policy optimization framework, e.g. Lagrangian relaxation methods, trust region methods. Moreover, we aim to provide theoretical guarantees of the projection stage and give a tighter bound of the update performance when alternating two stages.

\section{Impact Statements}
This paper presents work whose goal is to advance the field of Machine Learning. There are many potential societal consequences of our work, none which we feel must be specifically highlighted here.



\nocite{yu2019convergent}

\bibliography{example_paper}
\bibliographystyle{icml2024}

\newpage
\appendix
\onecolumn

\section{Adversarial Update Bound} \label{section:adversarial_update_bound}
In this section, we will prove \Cref{theorem:max-min} about the worst case of performance update through alternate iteration between Stage \ref{stage:max-reward} and Stage \ref{stage:min-cost}.

\begin{lemma}[Trust region update performance bound \cite{cpo}] \label{lemma:cpo-worst-case}
    Suppose $\pi_k$ and $\pi_{k+1}$ are related by (\ref{eq:optimization_formulation}), and $\pi_k\in\Pi_\theta$. A lower bound on the policy performance difference between $\pi_k$ and $\pi_{k+1}$ is
    \begin{equation}
        J_R(\pi_{k+1}) - J_R(\pi_k) \geq 
        - \frac{\sqrt{2\sigma}\gamma \epsilon^{\pi_{k+1}}_R}{(1-\gamma)^2}
    \end{equation}
    where $\epsilon_R^{\pi_{k+1}}=\max_s |\mathbb{E}_{a\sim \pi_{k+1}}[A^{\pi_k}_R(s,a)]|$. 

\end{lemma}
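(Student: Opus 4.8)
The plan is to derive the bound directly from the policy performance bound of \Cref{lemma:CPO_bound} combined with the two constraints that define the update (\ref{eq:optimization_formulation}). First I would instantiate the lower bound of \Cref{lemma:CPO_bound} with $\pi' = \pi_{k+1}$ and $\pi = \pi_k$, which gives
\begin{equation*}
J_R(\pi_{k+1}) - J_R(\pi_k) \geq \frac{1}{1-\gamma}\mathop{\mathbb{E}}\limits_{s\sim d_{\pi_k} \atop a\sim\pi_{k+1}}[A^{\pi_k}_R(s,a)] - \frac{\sqrt{2}\gamma \epsilon^{\pi_{k+1}}_R}{(1-\gamma)^2}\sqrt{\mathbb{E}_{s\sim d_{\pi_k}}[D_{KL}(\pi_{k+1}||\pi_k)[s]]}.
\end{equation*}
The two remaining tasks are then to show that the first (surrogate advantage) term is nonnegative, so that it may be discarded from the lower bound, and to control the square-root term using the trust-region constraint.

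For the surrogate term, the key observation is that $\pi_k$ is itself a feasible point of (\ref{eq:optimization_formulation}): it satisfies the trust-region constraint trivially, since $\bar{D}_{KL}(\pi_k||\pi_k) = 0 \leq \delta$, and it satisfies the cost constraint because substituting $\pi = \pi_k$ makes the surrogate cost term vanish, leaving $J_C(\pi_k) \leq d$. Since $\pi_{k+1}$ is the maximizer of the surrogate reward objective over all feasible policies, its objective value is at least that attained at $\pi_k$, namely
\begin{equation*}
\mathop{\mathbb{E}}\limits_{s\sim d_{\pi_k} \atop a\sim\pi_{k+1}}[A^{\pi_k}_R(s,a)] \;\geq\; \mathop{\mathbb{E}}\limits_{s\sim d_{\pi_k} \atop a\sim\pi_k}[A^{\pi_k}_R(s,a)] \;=\; 0,
\end{equation*}
where the final equality uses the standard identity $\mathbb{E}_{a\sim\pi_k}[A^{\pi_k}_R(s,a)] = 0$ for every state $s$. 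Dropping this nonnegative term only weakens the lower bound, which is permissible.

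For the divergence term I would invoke the trust-region constraint in (\ref{eq:optimization_formulation}) directly: $\mathbb{E}_{s\sim d_{\pi_k}}[D_{KL}(\pi_{k+1}||\pi_k)[s]] = \bar{D}_{KL}(\pi_{k+1}||\pi_k) \leq \delta$, so its square root is bounded by $\sqrt{\delta}$. Substituting both facts yields $J_R(\pi_{k+1}) - J_R(\pi_k) \geq -\sqrt{2\delta}\gamma \epsilon^{\pi_{k+1}}_R / (1-\gamma)^2$, matching the claim once the $\sigma$ appearing in the statement is identified with the trust-region radius $\delta$.

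The hard part — and the step deserving the most care — will be the feasibility of $\pi_k$ for the cost constraint, since that is exactly what licenses the comparison $\mathbb{E}_{a\sim\pi_{k+1}}[A^{\pi_k}_R] \geq \mathbb{E}_{a\sim\pi_k}[A^{\pi_k}_R]$. If $\pi_k$ itself violated $J_C(\pi_k) \leq d$, it would not be an admissible competitor in the $\arg\max$ and the nonnegativity argument would collapse; the hypothesis $\pi_k \in \Pi_\theta$ should therefore be read as including (or be supplemented by) the requirement that $\pi_k$ lie in the feasible set. Everything else reduces to direct substitution.
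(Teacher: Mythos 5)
Your proposal is correct and is essentially the argument the paper relies on: it imports this lemma from CPO without proof, and your derivation (instantiate the lower bound of Lemma~\ref{lemma:CPO_bound}, drop the surrogate term because the feasible competitor $\pi_k$ already achieves objective value $0$, and bound the KL term by the trust-region radius) is exactly the standard CPO worst-case-degradation proof. You are also right to flag the two loose ends, which are present in the paper as well: the hypothesis $\pi_k\in\Pi_\theta$ must implicitly include cost-feasibility $J_C(\pi_k)\leq d$ for the comparison step to go through, and the $\sigma$ in the stated bound is just the trust-region radius written $\delta$ elsewhere.
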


\begin{lemma}
\label{lemma:max-reward-bound}
Assume max-reward stage has totally $n_1$ updates, $\pi_i, \forall i=1,\cdots,n_1$ denote the solutions of these updates, and the initial policy is $\pi_0 = \pi^{g^{k-1}}_{min}$, which is the converged policy of the last min-cost stage. The $i$-th update can be formulated as the follow:
\begin{equation}
\label{eq:max-reward_optimization}
\begin{gathered}
    \pi_{i+1} = \arg\max_{\pi \in \Pi_\theta} \mathop{\mathbb{E}}\limits_{s\sim d_{\pi_i} \atop a \sim \pi} [ A_R^{\pi_i}(s,a) ]  \\
    \mathrm{s.t.} \;\; J_C(\pi_i) + \frac{1}{1-\gamma}\mathop{\mathbb{E}}\limits_{s\sim d_{\pi_i} \atop a \sim \pi} [ A_C^{\pi_i}(s,a) ] \leq d^k  \\
    \bar{D}_{KL}(\pi || \pi_i) \leq \sigma
\end{gathered}
\end{equation}
Then the following bound holds:
\begin{equation}
    J_R(\pi_{n_1}) - J_R(\pi_0) \geq  - \frac{\sqrt{2\sigma}\gamma }{(1-\gamma)^2} \sum_{j=1}^{n_1} \epsilon_R^{\pi_j} 
\end{equation}
where $\epsilon_R^{\pi_j}=\max_s |\mathbb{E}_{a\sim \pi_j}[A_R^{\pi_{j-1}}(s,a)]|$.    
\end{lemma}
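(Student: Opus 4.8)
The plan is to reduce the multi-step claim to a telescoping sum of single-step bounds, each of which is furnished directly by \Cref{lemma:cpo-worst-case}. The essential observation is that every individual update $\pi_{j-1} \to \pi_j$ occurring inside the max-reward stage is itself an instance of the trust-region constrained problem of the form (\ref{eq:optimization_formulation}): by (\ref{eq:max-reward_optimization}) the objective maximizes $\mathbb{E}_{s\sim d_{\pi_{j-1}}, a\sim\pi}[A_R^{\pi_{j-1}}(s,a)]$, the cost surrogate is constrained by the fixed budget $d^k$, and the trust region is $\bar{D}_{KL}(\pi\|\pi_{j-1}) \leq \sigma$. Since $\pi_{j-1} \in \Pi_\theta$ at every step, the structural hypotheses of \Cref{lemma:cpo-worst-case} are satisfied for each consecutive pair.

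First I would instantiate \Cref{lemma:cpo-worst-case} with $(\pi_k, \pi_{k+1}) = (\pi_{j-1}, \pi_j)$ for each $j = 1, \ldots, n_1$, yielding the per-step lower bound
\[
J_R(\pi_j) - J_R(\pi_{j-1}) \;\geq\; -\frac{\sqrt{2\sigma}\,\gamma\, \epsilon_R^{\pi_j}}{(1-\gamma)^2},
\]
where the constant $\epsilon_R^{\pi_j} = \max_s |\mathbb{E}_{a\sim\pi_j}[A_R^{\pi_{j-1}}(s,a)]|$ coincides exactly with the quantity $\epsilon_R^{\pi_{k+1}}$ appearing in the lemma under the identification $k+1 = j$. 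I would then telescope: writing the total improvement as a sum of consecutive differences,
\[
J_R(\pi_{n_1}) - J_R(\pi_0) \;=\; \sum_{j=1}^{n_1}\bigl(J_R(\pi_j) - J_R(\pi_{j-1})\bigr),
\]
and substituting the per-step estimate into each summand gives
\[
J_R(\pi_{n_1}) - J_R(\pi_0) \;\geq\; -\frac{\sqrt{2\sigma}\,\gamma}{(1-\gamma)^2}\sum_{j=1}^{n_1}\epsilon_R^{\pi_j},
\]
which is precisely the claimed inequality.

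The argument is essentially mechanical once the per-step reduction is justified, so the only point requiring genuine care — and hence the main obstacle, modest as it is — is verifying that each intermediate update truly conforms to the hypothesis of \Cref{lemma:cpo-worst-case}: that the trust-region radius $\sigma$ is common to all $n_1$ steps, and that every iterate $\pi_{j-1}$ remains in $\Pi_\theta$ so the single-step bound is applicable throughout. I would also emphasize that the individual advantage constants $\epsilon_R^{\pi_j}$ are deliberately retained inside the sum rather than collapsed into one worst-case term; collapsing them would only loosen the estimate to $-\tfrac{\sqrt{2\sigma}\gamma n_1}{(1-\gamma)^2}\max_j \epsilon_R^{\pi_j}$, whereas keeping the summation intact produces the stated form that subsequently feeds into the combined max-reward/min-cost estimate of \Cref{theorem:max-min}.
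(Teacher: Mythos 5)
Your proposal is correct and matches the paper's intended argument: the paper's own proof of this lemma is the single line ``this bound directly follows from Lemma~\ref{lemma:cpo-worst-case},'' and your telescoping of the per-step worst-case bound over the $n_1$ updates is exactly the argument being elided. Nothing is missing; you have simply written out the details the paper leaves implicit.
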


\begin{proof}
    This bound directly follows from Lemma \ref{lemma:cpo-worst-case}.
\end{proof}

\begin{lemma}
\label{lemma:min-cost-bound}
Assume min-cost stage has totally $n_2$ updates, $\pi_i, \forall i=1,\cdots,n_2$ denote the solutions of these updates, and the initial policy is $\pi_0 = \pi^{d^k}_{max}$, which is the converged policy of the last max-reward stage. The $i$-th update can be formulated as the follow:
\begin{equation}
\label{eq:min-cost_optimization}
\begin{gathered}
    \pi_{i+1} = \arg\min_{\pi \in \Pi_\theta} \mathop{\mathbb{E}}\limits_{s\sim d_{\pi_i} \atop a \sim \pi} [ A_C^{\pi_i}(s,a) ]  \\
    \mathrm{s.t.} \;\; J_R(\pi_i) + \frac{1}{1-\gamma}\mathop{\mathbb{E}}\limits_{s\sim d_{\pi_i} \atop a \sim \pi} [ A_R^{\pi_i}(s,a) ] \geq g^k  \\
    \bar{D}_{KL}(\pi || \pi_i) \leq \sigma
\end{gathered}
\end{equation}
Then the following bound holds:
\begin{equation}
    J_R(\pi_{n_2}) \geq g^k - \frac{\sqrt{2\sigma}\gamma }{(1-\gamma)^2} \sum_{j=1}^{n_2} \epsilon_R^{\pi_j} 
\end{equation}
where $\epsilon_R^{\pi_j}=\max_s |\mathbb{E}_{a\sim \pi_j}[A_R^{\pi_{j-1}}(s,a)]|$.    
\end{lemma}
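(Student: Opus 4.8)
The plan is to control how much the reward return can drop during the min-cost stage by telescoping over the $n_2$ updates and exploiting the reward surrogate constraint that each update in (\ref{eq:min-cost_optimization}) enforces. The starting point is that, by the construction of \Cref{stage:max-reward}, the initial policy of this stage satisfies $J_R(\pi_0) = J_R(\pi^{d^k}_{max}) = g^k$, so it suffices to bound the accumulated decrease $J_R(\pi_0) - J_R(\pi_{n_2})$ from above. Note that the per-step error term and the $\sqrt{2\sigma}$ factor match exactly because the trust-region constraint here is $\bar{D}_{KL}(\pi\|\pi_i) \leq \sigma$, so invoking \Cref{lemma:CPO_bound} and substituting the KL bound reproduces the same $\frac{\sqrt{2\sigma}\gamma}{(1-\gamma)^2}$ coefficient as in \Cref{lemma:max-reward-bound}.

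First I would apply the lower bound of \Cref{lemma:CPO_bound} to each consecutive pair $(\pi_{j-1}, \pi_j)$, which together with $\bar{D}_{KL}(\pi_j\|\pi_{j-1}) \leq \sigma$ gives
\[
J_R(\pi_j) - J_R(\pi_{j-1}) \geq \frac{1}{1-\gamma}\mathop{\mathbb{E}}\limits_{s\sim d_{\pi_{j-1}} \atop a\sim \pi_j}[A_R^{\pi_{j-1}}(s,a)] - \frac{\sqrt{2\sigma}\gamma\,\epsilon_R^{\pi_j}}{(1-\gamma)^2}.
\]
Next I would feed in the reward constraint active in the $(j-1)$-th min-cost update, namely $J_R(\pi_{j-1}) + \frac{1}{1-\gamma}\mathbb{E}_{s\sim d_{\pi_{j-1}},\, a\sim\pi_j}[A_R^{\pi_{j-1}}(s,a)] \geq g^k$, which holds because $\pi_j$ is feasible for that subproblem. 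Rearranging lower-bounds the surrogate advantage term by $g^k - J_R(\pi_{j-1})$, and substituting yields the clean per-step estimate $J_R(\pi_j) \geq g^k - \frac{\sqrt{2\sigma}\gamma\epsilon_R^{\pi_j}}{(1-\gamma)^2}$.

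From here the claimed bound follows directly: applied at $j = n_2$ this already gives $J_R(\pi_{n_2}) \geq g^k - \frac{\sqrt{2\sigma}\gamma\epsilon_R^{\pi_{n_2}}}{(1-\gamma)^2}$, and since every $\epsilon_R^{\pi_j} \geq 0$ we have $\epsilon_R^{\pi_{n_2}} \leq \sum_{j=1}^{n_2}\epsilon_R^{\pi_j}$, so the stated (looser) inequality with the full sum holds. Equivalently, one can telescope the per-step worst-case drops $J_R(\pi_j) - J_R(\pi_{j-1}) \geq -\frac{\sqrt{2\sigma}\gamma\epsilon_R^{\pi_j}}{(1-\gamma)^2}$ over $j=1,\dots,n_2$ and use $J_R(\pi_0)=g^k$, exactly mirroring the argument of \Cref{lemma:max-reward-bound}; this is what produces the summation form written in the statement.

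The step I expect to be the main obstacle is the invocation of the reward constraint: it is legitimate only when the $(j-1)$-th subproblem is feasible, so that $\pi_j$ genuinely satisfies $J_R(\pi_{j-1}) + \frac{1}{1-\gamma}\mathbb{E}[A_R^{\pi_{j-1}}] \geq g^k$. Since the per-step estimate only yields $J_R(\pi_{j-1}) \geq g^k - O(\epsilon)$ rather than $J_R(\pi_{j-1}) \geq g^k$, feasibility does not propagate automatically across iterations. A fully rigorous treatment would therefore either assume recursive feasibility (as is standard in CPO-style analyses) or fall back on the worst-case per-step drop, which avoids the constraint entirely and directly delivers the summation form claimed in the lemma.
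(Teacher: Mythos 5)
Your proposal is correct and follows essentially the same route as the paper's proof (apply \Cref{lemma:CPO_bound} step by step and use feasibility of the reward-surrogate constraint to control the advantage term); in fact your direct substitution of $\frac{1}{1-\gamma}\mathbb{E}[A_R^{\pi_{j-1}}]\geq g^k - J_R(\pi_{j-1})$ yields the cleaner per-step bound $J_R(\pi_j)\geq g^k - \frac{\sqrt{2\sigma}\gamma\,\epsilon_R^{\pi_j}}{(1-\gamma)^2}$, which is tighter than the paper's accumulated sum and sidesteps its informal ``assume $J_R(\pi_1)$ is at the lower bound'' step. One caveat on your closing remark: falling back on a per-step drop that ``avoids the constraint entirely'' would \emph{not} deliver the claimed form, since without the constraint the surrogate term can be as negative as $-\epsilon_R^{\pi_j}/(1-\gamma)$, so the recursive-feasibility assumption you flag (and the paper makes implicitly) is genuinely needed.
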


\begin{proof}
According to Lemma \ref{lemma:cpo-worst-case}, we have
\begin{align}
    &
    J_R(\pi_1) - J_R(\pi_0) \geq \frac{1}{1-\gamma} \mathop{\mathbb{E}}\limits_{s\sim d_{\pi_{0}} \atop a \sim \pi_1}[A_R^{\pi_0}(s,a)] - \frac{\sqrt{2\sigma}\gamma }{(1-\gamma)^2} \epsilon_R^{\pi_1} \\
    \Rightarrow \quad&
    J_R(\pi_1) \geq g^k + \frac{1}{1-\gamma} \mathop{\mathbb{E}}\limits_{s\sim d_{\pi_{0}} \atop a \sim \pi_1}[A_R^{\pi_0}(s,a)] - \frac{\sqrt{2\sigma}\gamma }{(1-\gamma)^2} \epsilon_R^{\pi_1} \\
    \Rightarrow \quad&
    J_R(\pi_1) \geq g^k - \frac{\sqrt{2\sigma}\gamma }{(1-\gamma)^2} \epsilon_R^{\pi_1}
\end{align}

We focus on the worst case, i.e., we assume $J_R(\pi_1)$ is close to its lower bound. Under this circumstance, we have
\begin{align}
    &
    J_R(\pi_2) - J_R(\pi_1) \geq \frac{1}{1-\gamma} \mathop{\mathbb{E}}\limits_{s\sim d_{\pi_{1}} \atop a \sim \pi_2}[A_R^{\pi_1}(s,a)] - \frac{\sqrt{2\sigma}\gamma }{(1-\gamma)^2} \epsilon_R^{\pi_2} \\    
    \Rightarrow \quad &
    J_R(\pi_2) \geq g^k + \frac{1}{1-\gamma} \mathop{\mathbb{E}}\limits_{s\sim d_{\pi_{1}} \atop a \sim \pi_2}[A_R^{\pi_1}(s,a)] - \frac{\sqrt{2\sigma}\gamma }{(1-\gamma)^2} (\epsilon_R^{\pi_1} + \epsilon_R^{\pi_2})   
\end{align}
According to (\ref{eq:min-cost_optimization}), Because $J_R(\pi_1)$ is at the lower bound and the algorithm aims to satisfy the constraint inequality, it is necessary that 
\begin{equation}
    \frac{1}{1-\gamma} \mathop{\mathbb{E}}\limits_{s\sim d_{\pi_{1}} \atop a \sim \pi_2}[A_R^{\pi_1}(s,a)] \geq 0
\end{equation}
then we get
\begin{equation}
    J_R(\pi_2) \geq g^k - \frac{\sqrt{2\sigma}\gamma }{(1-\gamma)^2} (\epsilon_R^{\pi_1} + \epsilon_R^{\pi_2})  
\end{equation}

Repeating the above derivation in $\pi_i,i=3,\cdots,n_2$, we can finally get
\begin{equation}
    J_R(\pi_{n_2}) \geq g^k - \frac{\sqrt{2\sigma}\gamma }{(1-\gamma)^2} \sum_{j=1}^{n_2} \epsilon_R^{\pi_j} 
\end{equation}

\end{proof}

Assume max-reward stage and min-cost stage have $n_1,n_2$ maximum number of iterations respectively. According to Lemma \ref{lemma:max-reward-bound} and Lemma \ref{lemma:min-cost-bound}, we have
\begin{align}
    & g^k - g^{k-1}   \\ 
    = \quad& J_R(\pi_{max}^{d^k}) - g^{k-1}  \\  
    \geq \quad& 
    J_R(\pi_{max}^{d^k}) - J_R(\pi_{min}^{g^{k-1}}) - \frac{\sqrt{2\sigma}\gamma }{(1-\gamma)^2} \sum_{j=1}^{n_2} \epsilon_R^{\pi_j^g}  \\
    \geq \quad&
     - \frac{\sqrt{2\sigma}\gamma }{(1-\gamma)^2} \sum_{j=1}^{n_1} \epsilon_R^{\pi_j^d} - \frac{\sqrt{2\sigma}\gamma }{(1-\gamma)^2} \sum_{j=1}^{n_2} \epsilon_R^{\pi_j^g} \\
     \geq \quad&
     - \frac{\sqrt{2\sigma}\gamma }{(1-\gamma)^2} (n_1 + n_2)\epsilon_R 
\end{align}
We use superscripts $d, g$ to distinguish different stages: $\pi_j^g, j=1,\cdots, n_2$ denote the min-cost stage; $\pi_j^d, j=1,\cdots,n_1$ denote the max-reward stage. And $\epsilon_R$ is the maximum value among $\epsilon_R^{\pi_j^d}, \epsilon_R^{\pi_j^g}, \forall j$.

Similarly, we have
\begin{align}
    & d^{k+1} - d^{k}  \\
    = \quad& J_C(\pi_{min}^{g^k}) - d^k  \\
    \leq \quad& J_C(\pi_{min}^{g^k}) - J_C(\pi_{max}^{d^k}) + \frac{\sqrt{2\sigma}\gamma }{(1-\gamma)^2} \sum_{j=1}^{n_1} \epsilon_C^{\pi_j^d} \\
    \leq \quad& \frac{\sqrt{2\sigma}\gamma }{(1-\gamma)^2} \sum_{j=1}^{n_2} \epsilon_C^{\pi_j^g} + \frac{\sqrt{2\sigma}\gamma }{(1-\gamma)^2} \sum_{j=1}^{n_1} \epsilon_C^{\pi_j^d} \\
    \leq \quad& \frac{\sqrt{2\sigma}\gamma }{(1-\gamma)^2}(n_1 + n_2) \epsilon_C
\end{align}
where $\epsilon_C$ is the maximum value among $\epsilon_C^{\pi_j^d}, \epsilon_C^{\pi_j^g}, \forall j$.

\section{Interior Point Method Optimal Bound} \label{section:interior-optimal-bound}
In this section, we will prove Theorem \ref{theorem:ipo_bound}.

We have the following problem:
\begin{equation}
\begin{gathered}
    \max_{\pi\in\Pi_{\theta}} \mathop{\mathbb{E}}\limits_{s\sim d_{\pi_k} \atop a\sim \pi}[A^{\pi_k}(s,a)] \\
    s.t.\;\; J_{C}(\pi_k) + \frac{1}{1-\gamma}\mathop{\mathbb{E}}\limits_{s\sim d_{\pi_k} \atop a\sim \pi} [A^{\pi_k}_{C}(s,a)] \leq d \\
    \bar{D}_{KL}(\pi|\pi_k) \leq \delta    
\end{gathered}
\end{equation}
Its Lagrange function is:
\begin{gather}
    \mathcal{L}= \mathop{\mathbb{E}}\limits_{s\sim d_{\pi_k} \atop a\sim \pi}[A^{\pi_k}(s,a)] -\beta(\bar{D}_{KL}(\pi|\pi_k) - \delta) -\lambda \widetilde{J}_C  \\
    \mathrm{where}\;\; \widetilde{J}_C = J_{C}(\pi_k) + \frac{1}{1-\gamma}\mathop{\mathbb{E}}\limits_{s\sim d_{\pi_k} \atop a\sim \pi} [A^{\pi_k}_{C}(s,a)] - d
\end{gather}
$\beta \geq 0, \lambda \geq 0$ are Lagrange multipliers.
We can reformulate the problem into the following primal form
\begin{equation}
    x^* = \max_{\pi} \min\limits_{\beta\geq 0 \atop \lambda \geq 0} \mathcal{L}
\end{equation}
and assume the optimal solution is: $(\pi^*, \beta^*, \lambda^*)$.

Define the dual function as
\begin{equation}
    f(\beta,\lambda) = \max_\pi \mathcal{L}
\end{equation}
and the dual problem as
\begin{equation}
\begin{gathered}
    p^* = \min_{\beta,\lambda} f(\beta,\lambda) \\
    s.t.\;\; \beta \geq 0 \\
    \lambda \geq 0    
\end{gathered}
\end{equation}
Assume the optimal solution of the dual problem is $(\pi^*_{d}, \beta^*_d, \lambda^*_d)$. 

According to the duality gap property, we have
\begin{equation}
    x^* \leq p^*
\end{equation}
In our implementation, we use the log barrier function to model the $\widetilde{J}_C \leq 0$ constraint and transform it into the objective. The corresponding optimization problem is
\begin{equation}
    \max_\pi \min\limits_{\beta\geq 0} \mathop{\mathbb{E}}\limits_{s\sim d_{\pi_k} \atop a\sim \pi}[A^{\pi_k}(s,a)] -\beta(\bar{D}_{KL}(\pi|\pi_k)-\delta) + \frac{\log(-\widetilde{J}_C)}{t}
    \label{eq:ipo}
\end{equation}
Assume (\ref{eq:ipo}) has an optimal solution $(\pi^*_{IPO}, \beta^*_{IPO})$ which is feasible. And $\pi^*_{IPO}$ satisfies
\begin{gather}
    \nabla L(\pi^*_{IPO}, \beta^*_{IPO}) + \frac{1}{\widetilde{J}_C(\pi^*_{IPO})t} \nabla \widetilde{J}_C(\pi^*_{IPO}) = 0 \\
    \mathrm{where}\; L(\pi,\beta) = \mathop{\mathbb{E}}\limits_{s\sim d_{\pi_k} \atop a\sim \pi}[A^{\pi_k}(s,a)] -\beta(\bar{D}_{KL}(\pi|\pi_k)-\delta)
\end{gather}
Make
\begin{equation}
    \lambda' = -\frac{1}{\widetilde{J}_C(\pi^*_{IPO})t}
\end{equation}
then we have
\begin{gather}
    f(\beta^*_{IPO}, \lambda') = \max_\pi L(\pi,\beta^*_{IPO}) + \frac{1}{\widetilde{J}_C(\pi^*_{IPO})t} \widetilde{J}_C(\pi) \\
    = L(\pi^*_{IPO}, \beta^*_{IPO}) + \frac{1}{t} 
\end{gather}
Because $(\pi^*_d, \beta^*_d, \lambda^*_d)$ are the optimal solution of the dual problem, then
\begin{gather}
    f(\beta^*_d, \lambda^*_d) \leq f(\beta^*_{IPO}, \lambda') \\
    L(\pi^*_d, \beta^*_d) + \lambda^*_d \widetilde{J}_C(\pi^*_d) \leq L(\pi^*_{IPO}, \beta^*_{IPO}) + \frac{1}{t}
\end{gather}
According to the duality gap, we have
\begin{gather}
    L(\pi^*, \beta^*) + \lambda^* \widetilde{J}_C(\pi^*) \leq L(\pi^*_{IPO}, \beta^*_{IPO}) + \frac{1}{t} 
\end{gather}
Because of the property of complementary slackness, we can further get
\begin{gather}
    \mathop{\mathbb{E}}\limits_{s\sim d_{\pi_k} \atop a\sim \pi^*}[A^{\pi_k}(s,a)]  \leq \mathop{\mathbb{E}}\limits_{s\sim d_{\pi_k} \atop a\sim \pi^*_{IPO}}[A^{\pi_k}(s,a)] + \frac{1}{t}
\end{gather}
Because $\pi^*_{IPO}$ is feasible, therefore
\begin{gather}
    \mathop{\mathbb{E}}\limits_{s\sim d_{\pi_k} \atop a\sim \pi^*}[A^{\pi_k}(s,a)] - \mathop{\mathbb{E}}\limits_{s\sim d_{\pi_k} \atop a\sim \pi^*_{IPO}}[A^{\pi_k}(s,a)] \geq 0
\end{gather}
Finally, we have
\begin{gather}
     0 \leq \mathop{\mathbb{E}}\limits_{s\sim d_{\pi_k} \atop a\sim \pi^*}[A^{\pi_k}(s,a)] - \mathop{\mathbb{E}}\limits_{s\sim d_{\pi_k} \atop a\sim \pi^*_{IPO}}[A^{\pi_k}(s,a)] \leq \frac{1}{t}
\end{gather}

\section{Adversarial Update Bound using Interior Point Method} \label{section:update-bound-ipo}
Denote $\pi^*_{IPO}$ as the optimal solution of the problem \ref{eq:ipo_optimization}. According to Lemma \ref{lemma:CPO_bound} and Theorem \ref{theorem:ipo_bound}, we have
\begin{align}
    &
    J(\pi^*_{IPO}) - J(\pi_k) \geq \frac{1}{1-\gamma} \mathop{\mathbb{E}}\limits_{s\sim d_{\pi_k} \atop a\sim \pi^*_{IPO}}[A^{\pi_k}(s,a)] - \frac{\sqrt{2}\gamma \epsilon^{\pi^*_{IPO}}}{(1-\gamma)^2} \sqrt{\mathbb{E}_{s\sim d_{\pi_k}}[D_{KL}(\pi^*_{IPO}|\pi_k)[s]]}  \\
    \Rightarrow \quad&
    J(\pi^*_{IPO}) - J(\pi_k) \geq \frac{1}{1-\gamma} \mathop{\mathbb{E}}\limits_{s\sim d_{\pi_k} \atop a\sim \pi^*_{IPO}}[A^{\pi_k}(s,a)] - \frac{\sqrt{2\delta}\gamma \epsilon^{\pi^*_{IPO}}}{(1-\gamma)^2} \\
    \Rightarrow \quad&
    J(\pi^*_{IPO}) - J(\pi_k) \geq \frac{1}{1-\gamma} \mathop{\mathbb{E}}\limits_{s\sim d_{\pi_k} \atop a\sim \pi^*}[A^{\pi_k}(s,a)] - \frac{1}{(1-\gamma)t} - \frac{\sqrt{2\delta}\gamma \epsilon^{\pi^*_{IPO}}}{(1-\gamma)^2} \\
    \Rightarrow \quad&
    J(\pi^*_{IPO}) - J(\pi_k) \geq - \frac{1}{(1-\gamma)t} - \frac{\sqrt{2\delta}\gamma \epsilon^{\pi^*_{IPO}}}{(1-\gamma)^2}
\end{align}
Then we can replace the bound in Lemma \ref{lemma:max-reward-bound} as
\begin{equation}
    J_R(\pi_{n_1}) - J_R(\pi_0) \geq -\frac{n_1}{(1-\gamma)t} - \frac{\sqrt{2\sigma}\gamma }{(1-\gamma)^2} \sum_{j=1}^{n_1} \epsilon_R^{\pi_j} 
\end{equation}
Similarly to the derivation of \Cref{theorem:max-min}, we can get
\begin{align}
    &
    g^k - g^{k-1}   \\ 
    \geq \quad& J(\pi^{d^k}_{max}) - J(\pi_{min}^{g^{k-1}}) - \frac{\sqrt{2\delta}\gamma}{(1-\gamma)^2}n_2 \epsilon_R \\
    \geq \quad& - \frac{n_1}{(1-\gamma)t} - \frac{\sqrt{2\delta}\gamma}{(1-\gamma)^2}n_1\epsilon_R - \frac{\sqrt{2\delta}\gamma}{(1-\gamma)^2} n_2 \epsilon_R \\
    = \quad& - \frac{n_1}{(1-\gamma)t} - \frac{\sqrt{2\delta}\gamma}{(1-\gamma)^2}(n_1+ n_2) \epsilon_R
\end{align}
And
\begin{align}
    &
    d^{k+1} - d^k \\
    \leq \quad& 
    J_C(\pi^{g^k}_{min}) - J_C(\pi^{d^k}_{max}) + \frac{\sqrt{2\delta}\gamma}{(1-\gamma)^2}n_1 \epsilon_C \\
    \leq \quad&
    \frac{n_2}{(1-\gamma)t} +
    \frac{\sqrt{2\delta}\gamma}{(1-\gamma)^2}n_2 \epsilon_C + \frac{\sqrt{2\delta}\gamma}{(1-\gamma)^2}n_1 \epsilon_C \\
    = \quad& \frac{n_2}{(1-\gamma)t} + \frac{\sqrt{2\delta}\gamma}{(1-\gamma)^2}(n_2 + n_1) \epsilon_C    
\end{align}

\section{Algorithm Details}
\Cref{algo:details} demonstrates the details about the implementation using interior-point method. The function \texttt{UpdateBudgets} in \Cref{algo:details} are provided in \Cref{algo:update_budgets}.
\begin{algorithm}[h]
   \caption{Algorithm implementation using interior-point method}
   \label{algo:details}
\begin{algorithmic} 
    \STATE \textbf{Initialize}: policy network parameter $\theta_0$, reward value network parameter $w_0^r$, cost value network parameter $w^{c}_0$, the data buffer $\mathcal{B}$, reward return queue $\mathcal{D}_R$, cost return queue $\mathcal{D}_C$, stage flag $f_0=0$, reward budget $g^0$, cost budget $d^0$.
    \FOR{$k=1,2,\cdots$}
        \STATE Collect batch data of $\{(s_i, a_i, s_{i+1}, r_i, c_i)\}_{i=0}^N$ using current policy $\pi_{\theta_k}$.
        \STATE Estimate reward and cost return: $\hat{J}_R, \hat{J}_C$ and append to queues $\mathcal{D}_R, \mathcal{D}_C$.
        \STATE Calculate GAE: $\{A_R(s_i,a_i), A_C(s_i,a_i)\}_{i=0}^N$ and target value: $\{V^{tar}_R(s_i), V^{tar}_C(s_i) \}_{i=0}^N$.
        \STATE Store data: $\mathcal{B} \leftarrow \{s_i, a_i, A_R(s_i,a_i), A_C(s_i,a_i), V_R^{tar}(s_i), V_C^{tar}(s_i)\}_{i=0}^N$.
        \STATE Update budgets and stage flag: $f_{k}, d^k, g^k \leftarrow \mathrm{UpdateBudgets}(\mathcal{D}_R, \mathcal{D}_C ) $
        \FOR{each mini-batch of size $B$ from the buffer}
            \STATE 
            \begin{equation*}
                w^r_{k} \leftarrow \arg\min_w \sum_{i=1}^B (V_R(s_i;w) - V_R^{tar}(s_i))^2, \quad 
                w^c_{k} \leftarrow \arg\min_w \sum_{i=1}^B (V_C(s_i;w) - V_C^{tar}(s_i))^2
            \end{equation*}

            \IF{$f_k==0$}
                \STATE
                \begin{equation*}
                    \theta_{k} \leftarrow \arg\max_\theta \Big\{ \frac{1}{B}\sum_{i=1}^B\frac{\pi_\theta(a_i|s_i)}{\pi_{\theta_k}(a_i|s_i)}A_R(s_i,a_i) + \phi(\hat{J}_C + \frac{1}{B}\sum_{i=1}^B\frac{\pi_\theta(a_i|s_i)}{\pi_{\theta_k}(a_i|s_i)}A_C(s_i,a_i) - d^k) \Big\}
                \end{equation*}
            \ELSIF{$f_k==1$}
                \STATE
                \begin{equation*}
                    \theta_{k} \leftarrow \arg\max_\theta \Big\{ -\frac{1}{B}\sum_{i=1}^B\frac{\pi_\theta(a_i|s_i)}{\pi_{\theta_k}(a_i|s_i)}A_C(s_i,a_i) + \phi( - \hat{J}_R - \frac{1}{B}\sum_{i=1}^B\frac{\pi_\theta(a_i|s_i)}{\pi_{\theta_k}(a_i|s_i)}A_R(s_i,a_i) + g^k) \Big\}   
                \end{equation*}
            \ELSIF{$f_k==2$}
                \STATE
                \begin{equation*}
                    \theta_{k} \leftarrow \arg\max_\theta \Big\{ -\frac{1}{B}\sum_{i=1}^B
                    KL(\pi_\theta(\cdot|s_i)||\pi_{old}(\cdot|s_i))
                    + \phi(\hat{J}_C + \frac{1}{B}\sum_{i=1}^B\frac{\pi_\theta(a_i|s_i)}{\pi_{\theta_k}(a_i|s_i)}A_C(s_i,a_i) - d^k) \Big\}                    
                \end{equation*}
            \ENDIF
        \ENDFOR        
    \ENDFOR 

\end{algorithmic}
\end{algorithm}

\begin{algorithm}[h]
   \caption{UpdateBudgets}
   \label{algo:update_budgets}
\begin{algorithmic} 
    \STATE \textbf{Input}: reward return queue $\mathcal{D}_R$, cost return queue $\mathcal{D}_C$.
    \IF{$k < n_e$}
        \STATE $f_k \leftarrow 0$, $d^k \leftarrow d^{k-1}$, $g^k \leftarrow g^{k-1}$ 
        \COMMENT{encourage exploration in the early stage}
        \STATE \textbf{Return}: $f_k, g^k, d^k$
    \ENDIF
    \IF{ queues $\mathcal{D}_R, \mathcal{D}_C$ both converges}
        \IF{$|mean(\mathcal{D}_C)-d_{des}|\leq \delta$}
            \STATE algorithm finishes.
        \ELSIF{$mean(\mathcal{D}_C) > d_{des}$}
            \STATE Update budget: $d^k \leftarrow d^{k-1} + k_p(d_{des} - mean(\mathcal{D}_C))$
            \STATE Switch to projection stage: $f_k \leftarrow 2$
        \ELSE
            \STATE Update budget: $d^k \leftarrow d^{k-1} + k(d_{des} - mean(\mathcal{D}_C))$
            \STATE Switch to max-reward stage: $f_k \leftarrow 0$
        \ENDIF
    \ELSE
        \IF{$f_{k-1} == 0$}
            \IF{reach maximum iterations of max-reward stage}
                \STATE Switch to min-cost stage: $f_k \leftarrow 1$
                \STATE Update reward budget: $g^k \leftarrow mean(\mathcal{D}_R)$, $d^k \leftarrow d^{k-1}$
            \ENDIF
        \ELSIF{$f_{k-1} == 1$}
            \IF{reach maximum iterations of min-cost stage}
                \STATE Switch to max-reward stage: $f_k \leftarrow 0$
                \STATE Update cost budget: $d^k \leftarrow mean(\mathcal{D}_C)$, $g^k \leftarrow g^{k-1}$
            \ENDIF
        \ENDIF
    \ENDIF
    \STATE \textbf{Return}: $f_k, g^k, d^k$
\end{algorithmic}
\end{algorithm}

\section{Experimental Details}

\subsection{Safety Gymnasium environments} \label{appendix:safety-gym-env}
Four environments used in experiments are illustrated in \Cref{fig:safety-gym-env}.

In the \texttt{Goal1} task, the agent aims to reach the green goal position while avoiding collision with blue hazards. If the agent enters the blue zone, it will receive a positive cost signal, which is proportional to the distance from the agent to the zone boundary. The reward signal consists of two parts: the first part indicates the change in the distance from the agent to the goal point. When the agent is closer to the goal than the previous time step, it receives a positive reward, vice versa a negative value. The second part of reward indicates whether the agent reaches the goal or not. If reaches, it will receive a large positive constant value.

In the \texttt{Circle1} task, the agent aims to follow the outermost circumference of the circle while staying within yellow lines. When the agent crosses the yellow boundary from the inside outward, it receives a constant cost signal. The reward function is designed as
\begin{equation}
    R_t = \frac{1}{1 + |r_{agent} - r_{circle}|} \cdot \frac{-uy+vx}{r_{agent}}
\end{equation}
where $u, v$ are the x-y axis velocity components of the agent, $x, y$ are the x-y axis coordinates of the agent, $r_{agent}$ is the Euclidean distance of the agent from the origin, $r_{circle}$ is the radius of the circle geometry.

The dimension of observation and action space in every environments are listed as follows:
\begin{itemize}
    \item PointGoal1: $\mathcal{S} \in \mathbb{R}^{60}, \mathcal{A}\in \mathbb{R}^2$
    \item CarGoal1: $\mathcal{S} \in \mathbb{R}^{72}, \mathcal{A}\in \mathbb{R}^2$
    \item PointCircle1: $\mathcal{S} \in \mathbb{R}^{28}, \mathcal{A}\in \mathbb{R}^2$
    \item CarCircle1: $\mathcal{S} \in \mathbb{R}^{40}, \mathcal{A}\in \mathbb{R}^2$
\end{itemize}

\begin{figure}[h]
    \centering
    \subfigure[PointGoal1]{
        \includegraphics[width=0.3\columnwidth]{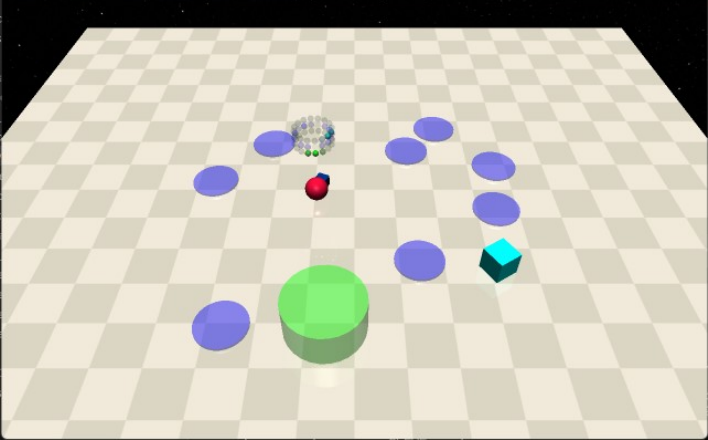}
    }
    \subfigure[PointCircle1]{
        \includegraphics[width=0.3\columnwidth]{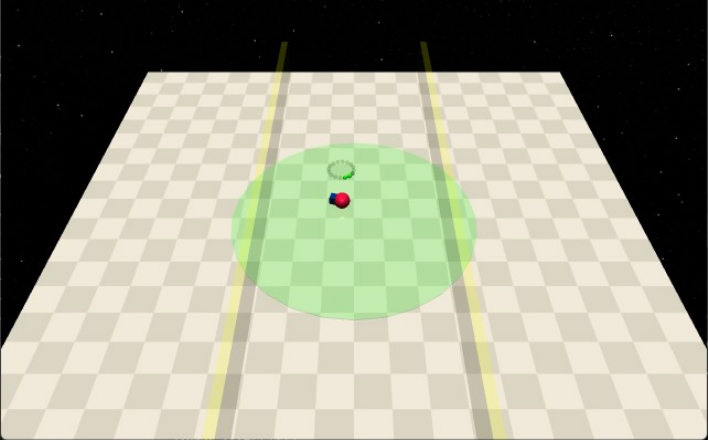}
    }
    
    \subfigure[CarGoal1]{
        \includegraphics[width=0.3\columnwidth]{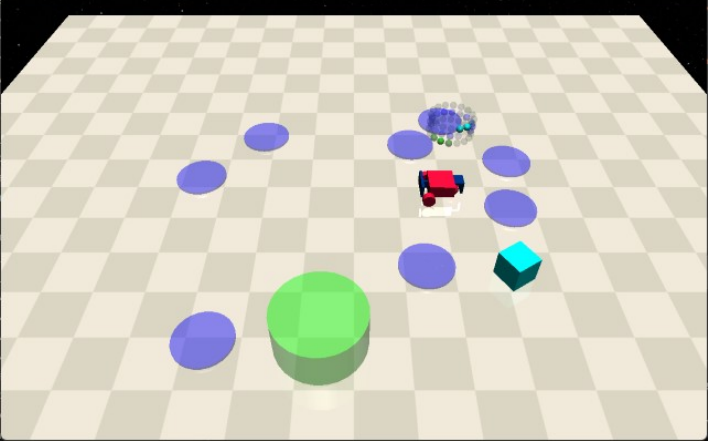}
    }
    \subfigure[CarCircle1]{
        \includegraphics[width=0.3\columnwidth]{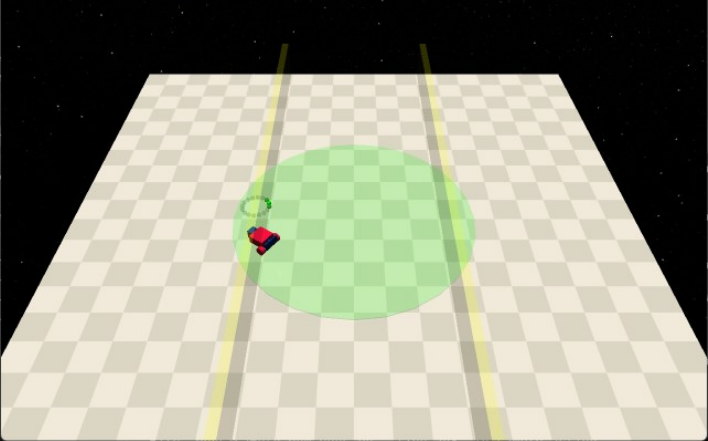}
    }
    \caption{An overview of single constraint environments.}
    \label{fig:safety-gym-env}
\end{figure}

\subsection{Multi-constraint quadruped environment}
In the quadruped locomotion task, the quadruped robot aims to track the desired velocity command while satisfying a set of constraints required by the task. The quadruped robot is shown in \Cref{fig:leggedrobot-env}. We use Isaac Gym \cite{makoviychuk2021isaac} to simulate robot dynamics and follow the method of \cite{leggedgym} to achieve parallel simulation. The observation and action space are provided in \Cref{table:leggedrobot-observation_space}, the reward functions are provided in \Cref{table:leggedrobot-rewards} and the cost functions are provided in \Cref{table:leggedrobot-costs}. 

\begin{figure}[h]
    \centering
    \includegraphics[width=0.3\columnwidth]{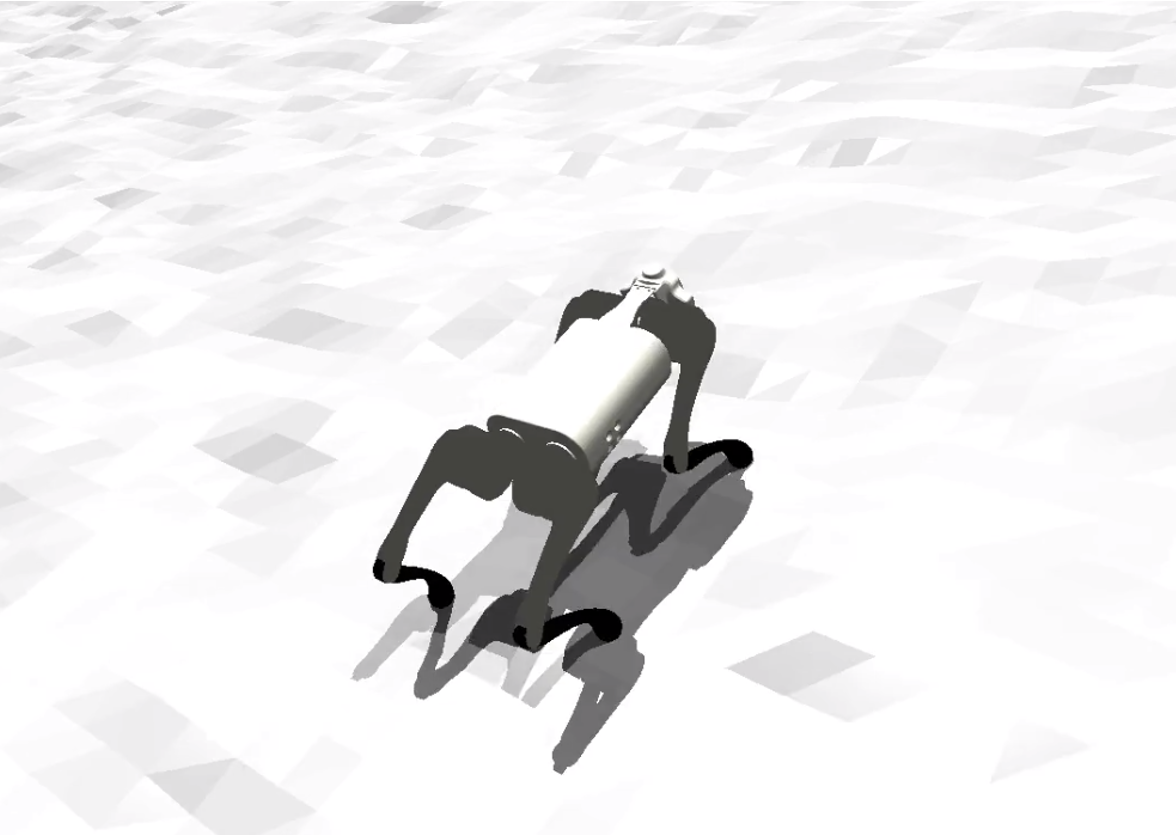}
    \caption{An illustration of the quadruped locomotion environment. }
    \label{fig:leggedrobot-env}
\end{figure}

\subsection{Algorithm Parameters} \label{section:parameters_algorithms}
Algorithm parameters used in Safety Gymnasium environments and the quadruped locomotion task are listed in \Cref{table:parameters-safetygym} and \Cref{table:parameters-leggedrobot}.

\subsection{Legged Robot Task Results} \label{section:leggedrobot-detailed-results}
\Cref{fig:leggedrobot-epcost}, \Cref{fig:leggedrobot-budgetcost} and \Cref{fig:leggedrobot-budgetreward} illustrate the detailed results in the quadruped locomotion task.

\begin{figure}
    \centering
    \includegraphics[width=\textwidth]{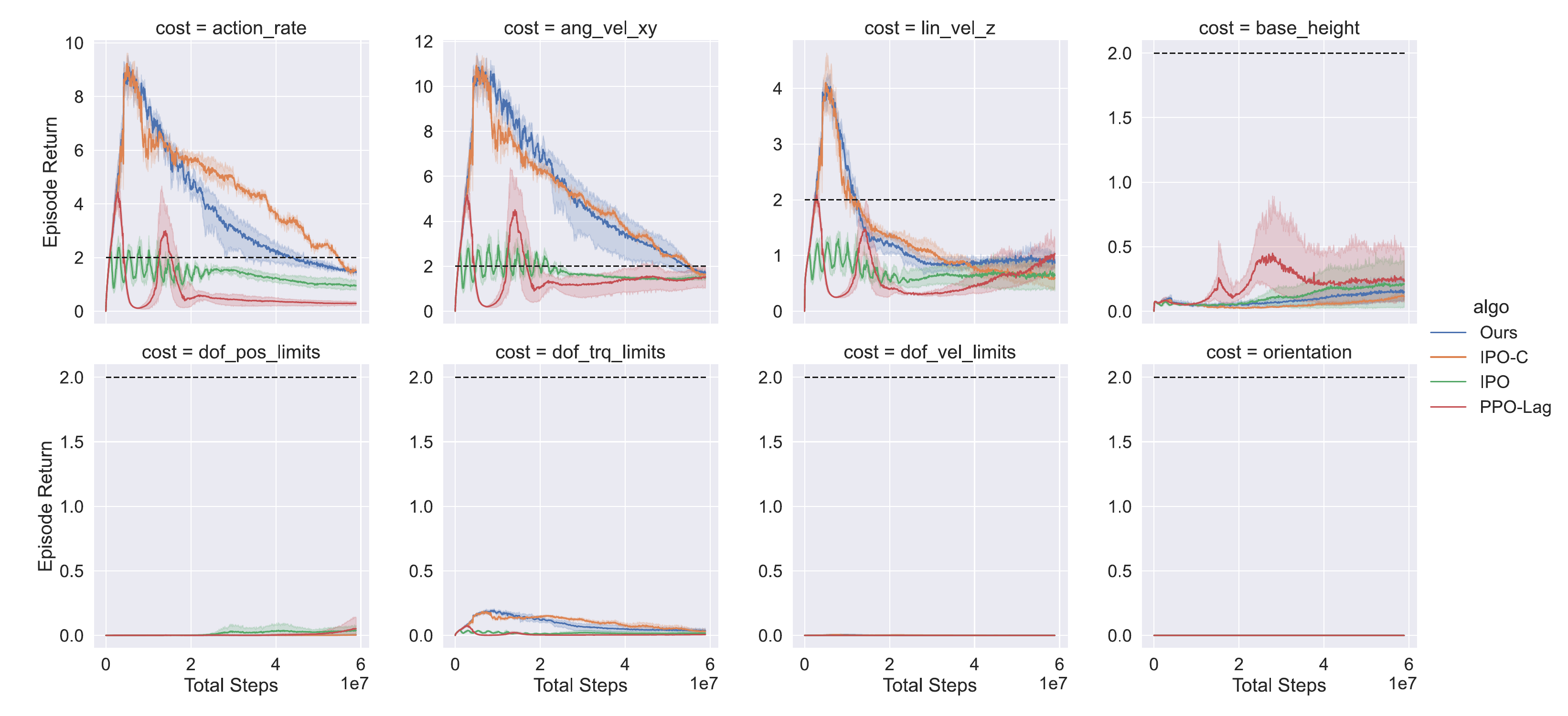}
    \caption{Episode cost return of different type of costs in the quadruped locomotion experiment. Results are the mean value of 5 training experiments using random seeds. The dashed line represents the desired cost limit = 2.}
    \label{fig:leggedrobot-epcost}
\end{figure}

\begin{figure}
    \centering
    \includegraphics[width=\textwidth]{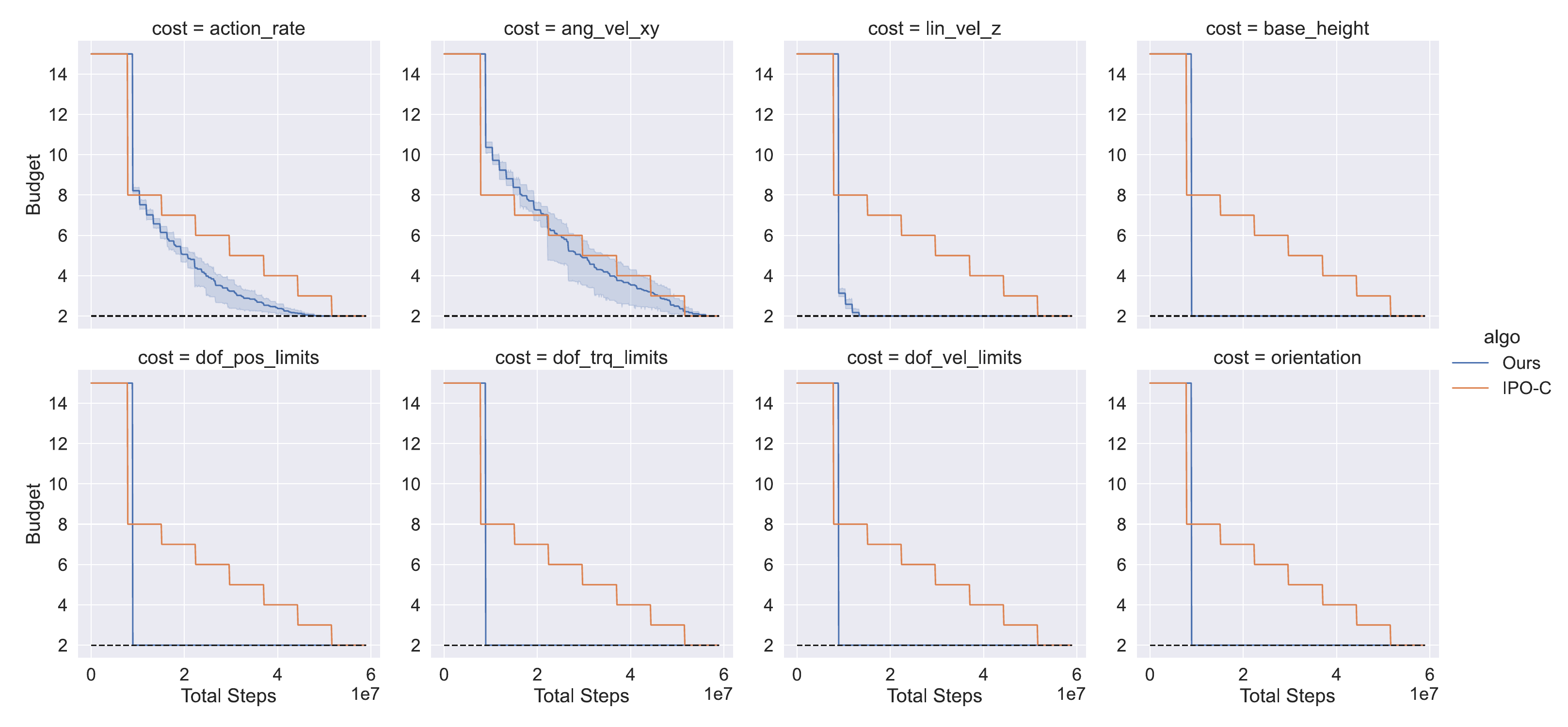}
    \caption{The change of cost budgets $d$ of ours and IPO with curriculum budgets during the training process in the quadruped locomotion task.}
    \label{fig:leggedrobot-budgetcost}
\end{figure}

\begin{figure}
    \centering
    \includegraphics[width=0.4\columnwidth]{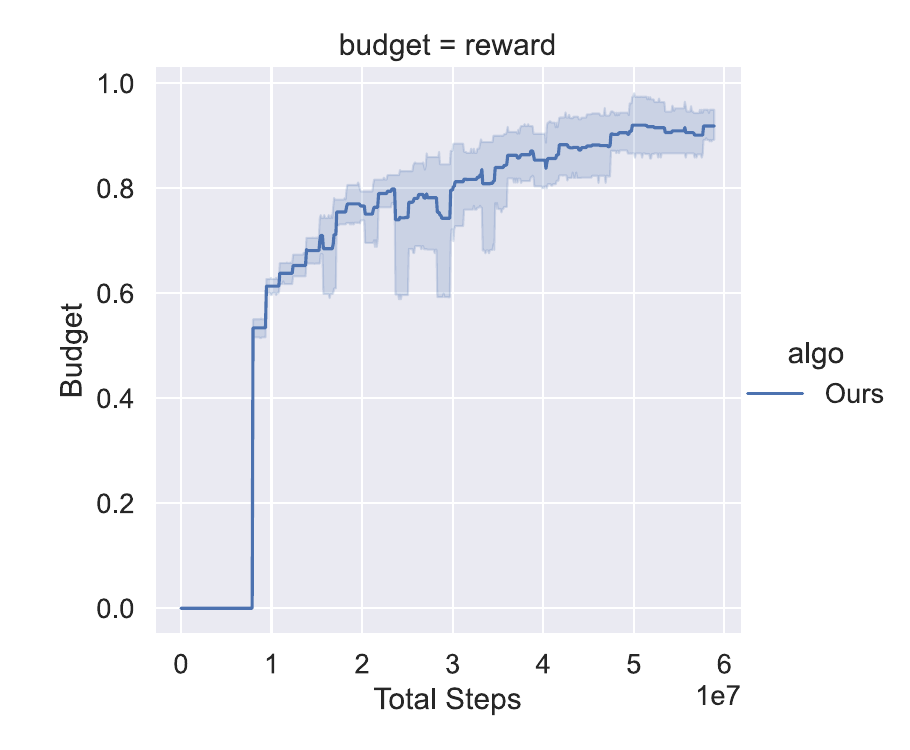}
    \caption{The change of reward budget $g$ during the training process in the quadruped locomotion task.}
    \label{fig:leggedrobot-budgetreward}
\end{figure}

\begin{table}[ht]
    \centering
    \caption{The observation and action space in the quadruped locomotion task.}
    \label{table:leggedrobot-observation_space}
    \begin{tabular}{ccc}
    \hline
    \multicolumn{3}{c}{\textbf{Observation Space}}                                       \\ \hline
    \multicolumn{1}{c|}{\textbf{Type}}           & \multicolumn{1}{c|}{\textbf{Dimension}} & \textbf{Scale} \\ \hline
    \multicolumn{1}{c|}{joint position}          & \multicolumn{1}{c|}{12}                 & 1     \\
    \multicolumn{1}{c|}{joint velocity}          & \multicolumn{1}{c|}{12}                 & 0.05   \\
    \multicolumn{1}{c|}{torso linear velocity }  & \multicolumn{1}{c|}{3}                  & 2     \\
    \multicolumn{1}{c|}{torso angular velocity } & \multicolumn{1}{c|}{3}                  & 0.25     \\
    \multicolumn{1}{c|}{gravity vector}          & \multicolumn{1}{c|}{3}                  & 0.3     \\
    \multicolumn{1}{c|}{actions}                 & \multicolumn{1}{c|}{12}                 & 1     \\
    \multicolumn{1}{c|}{last actions}            & \multicolumn{1}{c|}{12}                 & 1     \\
    \multicolumn{1}{c|}{feet contact states}     & \multicolumn{1}{c|}{4}                  & 1     \\
    \multicolumn{1}{c|}{gait phases}             & \multicolumn{1}{c|}{8}                  & 1     \\
    \multicolumn{1}{c|}{velocity commands}       & \multicolumn{1}{c|}{3}                  & 2     \\
    \hline
    \multicolumn{3}{c}{\textbf{Action Space}}                                                      \\ \hline
    \multicolumn{1}{c|}{\textbf{Type}}           & \multicolumn{1}{c|}{\textbf{Dimension}} & \textbf{Scale} \\ \hline
    \multicolumn{1}{c|}{desired joint position}  & \multicolumn{1}{c|}{12}                 & 0.25     \\ \hline
    \end{tabular}
\end{table}

\begin{table}[ht]
    \centering
    \caption{The reward functions used in the quadruped locomotion task. $\mathbf{v}_{xy}$ denotes the current torso velocity projected on the xy-plane. $\mathbf{w}_z$ is the current torso velocity projected on the z-axis. $\phi_i$ denotes the current gait phase of $i$th leg ranged from 0 to 1. $h_i, \mathbf{v}_i, \mathbf{f}_i$ denote the current foot height, foot velocity and ground reaction force of $i$th leg respectively. And the subscript $d$ represents the desired value.}
    \label{table:leggedrobot-rewards}
    \begin{tabular}{ccc}
    \hline
    \multicolumn{3}{c}{\textbf{Reward}}                                                         \\ \hline
    \multicolumn{1}{c|}{\textbf{Type}}             & \multicolumn{1}{c|}{\textbf{Formulation}}                         & \textbf{Weight}($w$)      \\ \hline
    \multicolumn{1}{c|}{linear velocity tracking}  & \multicolumn{1}{c|}{$\mathrm{exp}(-||\mathbf{v}_{xy}-\mathbf{v}_{xy,d}||^2/\sigma)$}          & 1        \\
    \multicolumn{1}{c|}{angular velocity tracking} & \multicolumn{1}{c|}{$\mathrm{exp}(-||\mathbf{w}_z-\mathbf{w}_{z,d}||^2/\sigma)$}               & 0.5      \\
    \multicolumn{1}{c|}{collision}                 & \multicolumn{1}{c|}{number of collision links}                    & -1       \\
    \multicolumn{1}{c|}{foot clearance}            & \multicolumn{1}{c|}{$\sum_{i\in\mathrm{foot}}(1-\phi_{i,d})\cdot[1 - \mathrm{exp}(-||h_i - h_{i,d}||^2/\sigma)]$}               & -1      \\
    \multicolumn{1}{c|}{gait velocity}             & \multicolumn{1}{c|}{$\sum_{i\in\mathrm{foot}}\phi_{i,d}\cdot[1 - \mathrm{exp}(-||\mathbf{v}_{i}||^2/\sigma)]$}               & -1      \\
    \multicolumn{1}{c|}{gait force}                & \multicolumn{1}{c|}{$\sum_{i\in\mathrm{foot}}(1-\phi_{i,d})\cdot[1 - \mathrm{exp}(-||\mathbf{f}_{i}||^2/\sigma)]$}               & -1      \\
    \hline
    \end{tabular}
\end{table}

\begin{table}[h]
\centering
\caption{The one-step cost functions used in the quadruped locomotion task. The subscript $l, r$ represent the lower and upper bounds respectively.}
\label{table:leggedrobot-costs}
\begin{tabular}{ccc}
\hline
\multicolumn{3}{c}{\textbf{Cost}}                                                                                  \\ \hline
\multicolumn{1}{c|}{\textbf{Type}}                & \multicolumn{1}{c|}{\textbf{Formulation}}  & \textbf{Boundary}($b$) \\ \hline
\multicolumn{1}{c|}{angular velocity on xy-plane} & \multicolumn{1}{c|}{$w$}             & $b_l=-1,b_r=1$                   \\
\multicolumn{1}{c|}{linear velocity in z-axis}    & \multicolumn{1}{c|}{$v_z$}                 & $b_l=-0.4,b_r=0.4$                   \\
\multicolumn{1}{c|}{orientation}                  & \multicolumn{1}{c|}{$||\mathbf{g}_{xy}||$} & $b_l=0,b_r=7$                   \\
\multicolumn{1}{c|}{torso height}                 & \multicolumn{1}{c|}{$h$}                     & $b_l=0.2,b_r=0.5$                   \\
\multicolumn{1}{c|}{joint position limit}         & \multicolumn{1}{c|}{$q$}                     & $b_l=-0.9\cdot limit, b_r=0.9\cdot limit$                   \\
\multicolumn{1}{c|}{joint velocity limit}         & \multicolumn{1}{c|}{$\dot{q}$}               & $b_l=-0.9\cdot limit, b_r=0.9\cdot limit$                   \\
\multicolumn{1}{c|}{joint torque limit}           & \multicolumn{1}{c|}{$\tau$}                  & $b_l=-0.7\cdot limit, b_r=0.7\cdot limit$                   \\
\multicolumn{1}{c|}{action change rate}           & \multicolumn{1}{c|}{$(a-a_{last})/dt$}       & $b_l=-10,b_r=10$                   \\ \hline
\end{tabular}
\end{table}

\begin{table*}[h]
\centering
\caption{Hyper-parameters for algorithms in Safety Gymnasium environment.}
\label{table:parameters-safetygym}
\begin{tabular}{lccccc}
\hline
Hyperparameter                        & Ours  & IPO   & CPO   & CRPO  & PPO-Lag \\ \hline
Number of Hidden layers               & 2     & 2     & 2     & 2     & 2       \\
Number of Hidden nodes                & 64    & 64    & 64    & 64    & 64      \\
Activation function                   & tanh  & tanh  & tanh  & tanh  & tanh    \\
Batch size                            & 20000 & 20000 & 20000 & 20000 & 20000   \\
SGD update iterations                 & 40    & 40    & 10    & 10    & 40      \\
Discount for reward                   & 0.99  & 0.99  & 0.99  & 0.99  & 0.99    \\
Discount for cost                     & 0.99  & 0.99  & 0.99  & 0.99  & 0.99    \\
GAE parameter for reward              & 0.95  & 0.95  & 0.95  & 0.95  & 0.95    \\
GAE parameter for cost                & 0.95  & 0.95  & 0.95  & 0.95  & 0.95    \\
Learning rate for policy              & 3e-4  & 3e-4  & N/A   & N/A   & 3e-4    \\
Learning rate for reward critic       & 3e-4  & 3e-4  & 1e-3  & 1e-3  & 3e-4    \\
Learning rate for cost critic         & 3e-4  & 3e-4  & 1e-3  & 1e-3  & 3e-4    \\
Learning rate for Lagrange multiplier & N/A   & N/A   & N/A   & N/A   & 0.035   \\
Initial value for Lagrange multiplier & N/A   & N/A   & N/A   & N/A   & 1e-3    \\
Upper bound for Lagrange multiplier   & N/A   & N/A   & N/A   & N/A   & 1000    \\
Optimizer for Lagrange multiplier     & N/A   & N/A   & N/A   & N/A   & Adam    \\
Trust region bound                    & 0.02  & 0.02  & 0.01  & 0.01  & 0.02    \\
Upper bound of penalty function       & 25    & 25    & N/A   & N/A   & N/A     \\
Clip ratio                            & 0.2   & 0.2   & N/A   & N/A   & 0.2     \\
Damping coeff.                        & N/A   & N/A   & 0.01  & 0.01  & N/A     \\
Backtracking coeff.                   & N/A   & N/A   & 0.8   & 0.8   & N/A     \\
Max backtracking iterations           & N/A   & N/A   & 15    & 15    & N/A     \\
Max conjugate gradient iterations     & N/A   & N/A   & 15    & 15    & N/A     \\ 
Max iterations of max-reward stage    & 10    & N/A   & N/A   & N/A   & N/A     \\
Max iterations of min-cost stage      & 5     & N/A   & N/A   & N/A   & N/A     \\
\hline
\end{tabular}
\end{table*}

\begin{table*}[h]
\centering
\caption{Hyper-parameters for algorithms in the quadruped locomotion task.}
\label{table:parameters-leggedrobot}
\begin{tabular}{lccc}
\hline
Hyperparameter                        & Ours  & IPO   & PPO-Lag \\ \hline
Number of Hidden layers               & 2     & 2     & 2       \\
Number of Hidden nodes                & 64    & 64    & 64      \\
Activation function                   & tanh  & tanh  & tanh    \\
Batch size                            & 16384 & 16384 & 16384   \\
SGD update iterations                 & 2     & 2     & 2       \\
Discount for reward                   & 0.99  & 0.99  & 0.99    \\
Discount for cost                     & 0.99  & 0.99  & 0.99    \\
GAE parameter for reward              & 0.95  & 0.95  & 0.95    \\
GAE parameter for cost                & 0.95  & 0.95  & 0.95    \\
Learning rate for policy              & 3e-3  & 3e-3  & 3e-3    \\
Learning rate for reward critic       & 3e-3  & 3e-3  & 3e-3    \\
Learning rate for cost critic         & 3e-3  & 3e-3  & 3e-3    \\
Learning rate for Lagrange multiplier & N/A   & N/A   & 0.035   \\
Initial value for Lagrange multiplier & N/A   & N/A   & 1e-3    \\
Upper bound for Lagrange multiplier   & N/A   & N/A   & 1000    \\
Optimizer for Lagrange multiplier     & N/A   & N/A   & Adam    \\
Trust region bound                    & 0.02  & 0.02  & 0.02    \\
Upper bound of penalty function       & 1     & 1     & N/A     \\
Clip ratio                            & 0.2   & 0.2   & 0.2     \\
Max iterations of max-reward stage    & 20    & N/A   & N/A     \\
Max iterations of min-cost stage      & 10    & N/A   & N/A     \\
\hline
\end{tabular}
\end{table*}


\end{document}